\documentclass[lettersize,journal]{IEEEtran}
\usepackage{amsmath,amsfonts}
\usepackage{algorithmic}
\usepackage{algorithm}
\usepackage{array}
\usepackage{textcomp}
\usepackage{stfloats}
\usepackage{url}
\usepackage{verbatim}
\usepackage{graphicx}
\usepackage{cite}
\usepackage{bm}
\usepackage{tablefootnote}
\usepackage[capitalise]{cleveref}
\usepackage{tablefootnote}
\usepackage{fp,tikz,pgfplots}
\usetikzlibrary{arrows,shapes,backgrounds,patterns,fadings,matrix,arrows,calc,
	intersections,decorations.markings,
	positioning,arrows.meta}
\usepgfplotslibrary{fillbetween}
\usepgfplotslibrary{statistics}
\usepgfplotslibrary{groupplots}
\usepackage{xcolor}
\usepackage{caption}
\usepackage{subcaption}
\pgfplotsset{width=5\columnwidth /5, compat = 1.13,
	height = 60\columnwidth /100, grid= major,
	legend cell align = left, ticklabel style = {font=\scriptsize},
	every axis label/.append style={font=\small},
	legend style = {font={\scriptsize}},title style={yshift=-7pt, font = \small} }

\newtheorem{assumption}{\bf{Assumption}}
\newtheorem{theorem}{\bf{Theorem}}

\newtheorem{lemma}{\bf{Lemma}}
\newtheorem{remark}{\bf{Remark}}
\newtheorem{definition}{\bf{Definition}}

\xdefinecolor{redZ}{RGB}{234,29,93}
\xdefinecolor{blueZ}{RGB}{19,106,213}
\xdefinecolor{yellowZ}{RGB}{251,138,46}

\xdefinecolor{alienware_black}{RGB}{34, 34, 34}
\xdefinecolor{alienware_red}{RGB}{206,24,30}
\xdefinecolor{alienware_blue}{RGB}{0,124,192}
\xdefinecolor{alienware_yellow}{RGB}{255,194,14}
\xdefinecolor{carrot_blue}{RGB}{56,142,209}
\xdefinecolor{carrot_orange}{RGB}{215,78,38}
\xdefinecolor{carrot_green}{RGB}{0,104,55}
\xdefinecolor{carrot_yellow}{RGB}{247,147,30}
\newcommand{\bandcurve}[5]{%
	\addplot[#1, each nth point=2] table[x=#4, y=mean_#5]{#3};
	\addplot[name path=#5U, draw=none, forget plot, each nth point=3] table[x=#4, y=hi_#5]{#3};
	\addplot[name path=#5L, draw=none, forget plot, each nth point=3] table[x=#4, y=lo_#5]{#3};
	\addplot[#2, fill opacity=0.5, forget plot] fill between[of=#5U and #5L];}
\definecolor{carrot_orange}{RGB}{230,126,34}
\definecolor{carrot_blue}{RGB}{41,128,185}
\definecolor{carrot_green}{RGB}{39,174,96}
\definecolor{carrot_yellow}{RGB}{246,190,0}
\definecolor{c_red}{RGB}{192,57,43}
\definecolor{c_purple}{RGB}{142,68,173}

\begin{document}
	
	\title{
		COIN-GP: Cooperative Online Learning in Networked Distributed Systems with Partial Measurements via Gaussian Process Regression
	}
	
	\author{
		Zewen Yang$^{1}$,~\IEEEmembership{Member,~IEEE}, 
		Xiaobing Dai*$^{2}$, 
		Zhenxiao Yin$^{3}$, 
		Hang Zhao$^{3}$, \\
		Zhijun Li$^{4}$,~\IEEEmembership{Fellow,~IEEE},
		C.C. Chan$^{5}$,~\IEEEmembership{Life Fellow,~IEEE}
		
		\thanks{*Corresponding author: Xiaobing Dai \textless{}xiaobing.dai@tum.de\textgreater{}}
		\thanks{
			$^{1}$Zewen Yang is with the Chair of Robotics and Systems Intelligence (RSI), Munich Institute of Robotics and Machine Intelligence (MIRMI), Technical University of Munich (TUM), 80992 Munich, Germany.
		}
		\thanks{
			$^{2}$Xiaobing Dai is with the Technical University of Munich, 80333 Munich, Germany.
		}
		\thanks{
			$^{3}$Zhenxiao Yin and Hang Zhao are with The Hong Kong University of Science and Technology (Guangzhou).
		}
		\thanks{
			$^{4}$Zhijun Li is with the School of Mechanical Engineering, Translational Research Center, Tongji University, Shanghai 201804, China, affiliated with Shanghai Yangzhi Rehabilitation Hospital and also with Shanghai Key Laboratory of Wearable Robotics and Human-Machine Interaction, and also with Department of Automation, University of Science and Technology of China, Hefei 230026, China.
		}
		\thanks{
			$^{5}$ C.C. Chan is with The Hong Kong Polytechnic University.
		}
		
	}
	
	\maketitle
	
	\begin{abstract}
		In this paper, we tackle the problem of jointly estimating the system states and partially unknown dynamics within distributed sensor-equipped networks, particularly in scenarios where only partial state observations are available.
		To address this issue, we propose an observer-based dynamic cooperative learning framework incorporating online distributed Gaussian Process (GP) regression, which enables accurate estimation despite incomplete in measurements and deficient GP models. 
		In addition, a novel data collection strategy is introduced, with theoretical conditions ensuring feasible data acquisition. 
		Moreover, we also derive an error upper bound encompassing state estimation and model estimation, leveraging the deterministic error bounds of GPs. 
		Empirical simulations demonstrate the superiority of our approach compared to existing distributed GP-based methods.
	\end{abstract}
	
	\begin{IEEEkeywords}
		Cooperative learning, online learning, networked systems, Gaussian process regression, system estimation.
	\end{IEEEkeywords}
	
	\section{Introduction}
	\IEEEPARstart{C}{ooperative}  learning in distributed dynamical systems has gained significant attention due to its broad applications in areas such as sensor networks, multi-robot systems, and autonomous underwater vehicles~\cite{Huang_TCYB2025_Visual,Ma_TCYB2024_Adaptive,Yan_OE2020_Virtual}. 
	The primary goal of cooperative learning is to enable a group of nodes or agents to collaboratively infer the unknown pattern by sharing information and leveraging neighbors' learned results. 
	
	In this study, we investigate the problem of handling partial measurements in distributed learning systems while ensuring performance guarantees. 
	The major challenge lies in the joint estimation of the system states and the unknown function. 
	Although state estimation techniques such as the Luenberger observer, high-gain observer, and sliding mode observer have been widely employed in feedback systems, they face significant limitations for discrete-time systems with unknown nonlinear dynamics components.
	In particular, the Luenberger observer, despite its efficacy in reconstructing states from partial measurements, relies on a fully known system model, rendering it impractical when the nonlinear component is unknown, even if the linear part of the dynamics is known~\cite{Zeitz_1987SCL_extendedLuenberger}.
	Similarly, high-gain observers, primarily designed for continuous-time systems, encounter stability issues and exacerbated peaking phenomena when discretized, due to finite sampling rates~\cite{Khalil_IJRNC2014_High}. 
	Sliding mode observers, while robust in continuous-time applications, struggle with the implementation of infinitely fast switching in discrete-time systems, leading to chattering and reduced accuracy~\cite{Spurgeon_2008IJSS_Sliding}. 
	To address these challenges, we employ Gaussian process (GP) regression, a machine learning technique particularly well-suited for unknown function modeling in dynamical systems~\cite{Rasmussen_2006_Gaussian}. 
	As a Bayesian learning method, GP regression is widely recognized for its rigorous prediction error quantification~\cite{Maddalena_2021Auto_Deterministic} and is frequently used in model identification. 
	Its capability for online learning offers a distinct advantage, enabling real-time model updates based on new measurements and thereby facilitating adaptation to changing environments. 
	In addition to the problems posed by the partial measurements and unknown nonlinear dynamics components, we consider a distributed sensor network comprising multiple subsystems.
	These subsystems are equipped with sensors that measure only partial states of the overall system, adding another layer of complexity to the estimation process. 
	Furthermore, we consider the scenario where only a limited number of subsystems have access to sufficient training data for learning.
	Our proposed approach for cooperative online learning in networked distributed systems under partial measurements via GP regression (COIN-GP) enables the exchange of predictions among interconnected subsystems, which not only enhances the efficiency of the learning process but also improves the overall state estimation performance, particularly for subsystems with limited or poor datasets. 
	Through dynamic cooperative learning, subsystems can leverage information from their neighbors, significantly enhancing prediction accuracy and system robustness.

	\subsection{Related Work}
	In addressing the joint estimation problem within nonlinear systems, \cite{Bin_TAC2021_Model} introduced an adaptive state observer capable of both state estimation and model identification. 
	Building upon this framework, \cite{Buisson_ACC2021_Joint} further enhanced it by integrating GP models. 
	These observers rely on high-gain mechanisms, which present limitations when applied to discrete-time nonlinear systems as mentioned above. 
	Nevertheless, the existing methodologies primarily focus on single systems, thereby presenting challenges when extending their application to distributed systems, such as multi-agent systems (MASs) or multi-robot systems, due to the complexities inherent in interconnections between subsystems.
	
	In MASs, distributed observers are essential for estimating unknown dynamics or states. 
	Adaptive distributed observers have been proposed for leader-following consensus problems~\cite{Cai_TAC2016_Leader, Yu_TAC2016_Observer, Wang_TYCB2022_Designing}, including scenarios with uncertain leaders in a linear form~\cite{Wang_TAC2020_Adaptive} and reduced-order design~\cite{Wang_TYCB2022_Consensus}. 
	For formation tracking, distributed extended state observers have been used to model the nonlinear unknown functions of agents~\cite{Lv_IJRNC2023_Fully}, but these proposed methods are limited to second-order strict-feedback dynamics. 
	Even with significant progress in the development of observers for MASs, current methods primarily address either state estimation or unknown function modeling separately, lacking strategies for joint estimation problems in MASs. 
	To fill this gap, recent work has explored the use of neural networks for learning unknown functions in MASs~\cite{Tahoun_ISAT2022_Adaptive}. 
	For instance, learning-based distributed observers have been proposed to estimate both the states and parameters of the leader node~\cite{Wang_Auto2024_Learning, Wang_IS2024_Neural}. 
	However, these approaches often rely on the assumption that the unknown function is in a vector form of a linear combination of known features. 
	GP-based methods provide deterministic error bounds and naturally handle agents with limited or no training data through uncertainty-weighted cooperation, whereas NN-based approaches can offer greater scalability and capacity at the cost of interpretability and formal guarantees.

	Therefore, we consider GP regression for handling unknown dynamics that do not presuppose a predefined model structure. 
	In order to adapt to the distributed systems, distributed Gaussian processes (DGPs), where a central entity aggregates predictions from distributed experts using divided sub-datasets \cite{Bach_ICML2015_Distributed}. 
	Advanced methods for DGPs in MASs include optimizing aggregation weights through variations in the product of experts (PoE) method~\cite{yang_CDC2021_Distributed} and applying the dynamical average consensus algorithm~ \cite{Lederer_TAC2023_Cooperative} or maximal information coefficient~\cite{Ge_TYCB2025_Structure} to expedite convergence in joint predictions. 
	However, these approaches encounter challenges when applied to scenarios involving online learning. 
	Specifically, the offline collected dataset, which does not update dynamically during the operation with new observations, limits their adaptability in environments that are dynamic. Recent research \cite{Hoang_AAAI2019_Collective, dai_EJC2024_Decentralized, dai_TNNLS2025_Cooperative} underscores the critical importance of integrating streaming data to enhance the adaptability of DGPs.
	However, these methodologies require complete state observation in order to work.  
	While current approaches are well-suited for continuous-time systems, their direct applicability to discrete-time systems is constrained due to different stability conditions.
	More recently, cooperative GP-based learning has been advanced by elective learning~\cite{Yang_JAAMAS2026_Whom, yang2026quality}, control under switching topologies~\cite{yang_ACC2024_cooperative}, and event-triggered online learning~\cite{dai_TNNLS2025_Cooperative}, while distributed impulsive observers handle aperiodically sampled outputs~\cite{Zhou_TCYB2025_Leader}, interconnected systems with input and output disturbances~\cite{Mu_TCYB2023_Robust}, and uncertainty-aware learning is strengthened in~\cite{Yang_TCYB2024_Cooperative, Liu_TCYB2024_Robust}.
	Nevertheless, the learning-based approaches assume full state measurements, whereas the estimation-oriented schemes do not simultaneously learn the unknown dynamics.
	Therefore, to the best of our knowledge, there exists no approach designed explicitly to resolve the problems associated with unknown dynamics and incomplete measurements in online distributed learning with GPs in discrete-time systems while also allowing subsystems without inference capabilities.
	
	\subsection{Contribution and Structure} 
	The summary of the contributions of the paper is listed below.
	\begin{itemize}
		\item Data acquisition strategy: For constructing the training dataset with limited measurements, the data collection strategy is proposed along with sufficient conditions of data collectability.
		
		\item Distributed learning-based estimation: The COIN-GP method is presented not only to facilitate full state estimation with incomplete observations but also to enable each individual system to infer the unknown function. Additionally, the proposed dynamic cooperative learning approach allows the distributed system to operate without the need to establish a GP model for every subsystem.
		
		\item Learning and control performance analysis: The joint error upper bound of the state observation and model prediction is derived by employing the deterministic error bound inherited from GP regression, which ensures guaranteed performance for safety-critical applications.
	\end{itemize}
	
	The paper is organized as follows: In Section \ref{sec_PP}, we lay out the preliminaries and define the problem setting. Section \ref{section_observer_based_cooperative_learning} then elaborates on the data collection strategy and the COIN-GP approach, along with a comprehensive performance analysis. In Section \ref{sec_simulation}, we demonstrate the effectiveness of our proposed method by conducting comparative analyses with existing distributed GP-based aggregation frameworks through simulation experiments. Finally, Section \ref{sec_conclusion} presents the conclusions drawn from this study.

	\section{Preliminaries and Problem Formulation}\label{sec_PP}
	
	\subsection{Notation and Graph Theory}
	We denote $\mathbb{R}$ by the set of real numbers and let $\mathbb{R}_{>0}$ represents positive reals $(0,\infty)$ and $\mathbb{R}_{\geq 0}$ non-negative reals $[0,\infty)$. 
	Natural numbers are symbolized by $\mathbb{N}$ and $\mathbb{N}_{0}$ with zero. $\mathbb{I}_{[a, b]}$ denotes the integer interval from $a$ to $b$. The vector $\bm{1}_N \in \mathbb{R}^N$ is defined as $\bm{1}_N = [1,\cdots,1]^T$ and the $N\times N$ identity matrix as $\boldsymbol{I}_N$. 
	Define the function $\lambda_i(\cdot)$ returns the $i$-th eigenvalue of a square matrix, while $\underline{\lambda}(\cdot)$ returns the minimum eigenvalue, and $\bar{\lambda}(\cdot)$ returns the maximum eigenvalue of the matrix.
	Additionally, we represent a diagonal matrix as $\mathrm{diag}(a_1, \dots, a_N)$ and concatenated block matrices diagonally as $\mathrm{blkdiag}(\boldsymbol{A}_1, \dots, \boldsymbol{A}_N)$. The operator $\otimes$ denotes the Kronecker product.

	The nodes (systems/agents/sensors) are connected through a communication network, which is described as an undirected graph $\mathcal{G} = \{ \mathcal{V}, \mathcal{E} \}$ with the vertex set $\mathcal{V}=\{1,2,\dots, N\}$ with $N \in \mathbb{N}$. 
	The edge set $\mathcal{E} \in \mathcal{V} \times \mathcal{V}$ depicts the connection between each node, where the pair $(i,j) \in \mathcal{E}$ indicates the existence of the communication channel from node $j$ to node $i$ for $\forall i,j \in \mathcal{V}$, vice versa. The adjacent matrix $\mathcal{A}$ is defined based on the topology $\mathcal{G}$, where $a_{i,j} = 1$ if $(j,i) \in \mathcal{E}$ and $a_{i,j} = 0$ otherwise. Moreover, we define the set $\mathcal{N}_i$ is the neighbor set of node $i$, i.e., $\mathcal{N}_i = \{ j \in \mathcal{V} | (j,i) \in \mathcal{E} \}$.

	\subsection{Problem Formulation}
	In this paper, we investigate the estimation of a class of discrete-time Lur'e systems~\cite{lur1944theory}. These systems, characterized by a feedback structure involving a linear dynamical component and a nonlinear element, are governed by the  dynamics
	\begin{equation}
		\label{eq_dynamics}
		\bm{x}_0(k+1) = \bm{A} \bm{x}_0(k) + \bm{B} \bm{f}(\bm{x}_0(k)), \quad \forall k \in \mathbb{N}_{0},
	\end{equation}
	where \(\bm{x}_0(k) \in \mathbb{R}^n\) denotes the system state at time step \(k\), with \(n \in \mathbb{N}\) representing the state dimension. 
	The matrices \(\bm{A} \in \mathbb{R}^{n \times n}\) and \(\bm{B} \in \mathbb{R}^{n \times m}\) are known reflecting the system structure, with \(m \in \mathbb{N}\) being the dimension of the nonlinear output. 
	However, the unknown nonlinear function $\bm{f}(\cdot): \mathbb{R}^n \to \mathbb{R}^m$ embodies the latent trends governing the evolution of the system, which needs to be inferred. 
	
	\begin{remark}
		The system described in \eqref{eq_dynamics} can also be interpreted as a discrete-time control-affine system with matched nonlinearities, expressed as
		$\bm{x}_0(k+1) = \bm{A} \bm{x}_0(k) + \bm{B} \left( \bm{u}(\bm{x}_0(k)) + \bm{f}(\bm{x}_0(k)) \right)$, where \(\bm{u}(\cdot): \mathbb{R}^n \to \mathbb{R}^m\) represents a state-dependent control input. 
		Employing a feedback controller $\bm{u} = \bm{K}\bm{x}_0(k)$, the system \eqref{eq_dynamics} is retrieved from $\bm{x}_0(k+1) = \tilde{\bm{A}} \bm{x}_0(k) + \bm{B}  \bm{f}(\bm{x}_0(k))$ with $\tilde{\bm{A}} = {\bm{A}}+ \bm{K}$.
		Furthermore, in the absence of linear dynamics, i.e., when \(\bm{A} = \mathbf{0}\), the system simplifies to a discrete-time nonlinear autonomous form: \(\bm{x}(k+1) = \bm{B} \bm{f}(\bm{x}(k))\). 
		Therefore, this versatility enables the framework in \eqref{eq_dynamics} to model a diverse array of dynamical systems, ranging from linear controlled systems to fully nonlinear autonomous ones, enhancing its applicability across various domains.
	\end{remark}

	In order to obtain the behaviors of the partially unknown system \eqref{eq_dynamics} over time, specifically concerning the states $\bm{x}_0$ and the dynamics $\bm{f}(\cdot)$, we assume a collective of $N\in \mathbb{R}_{>0}$ networked sensors is deployed. 
	This arrangement gives rise to a multi-agent or multi-robot configuration, where each agent or robot is equipped with its own sensor. 
	Considering the partial observation of $\boldsymbol{x}_0$, each subsystem's measurement is given by 
	\begin{align}
		\label{eqn_measurement}
		\bm{y}_i(k) = \bm{C}_i \bm{x}_0(k) + \bm{v}_i(k), && \forall i \in \mathcal{V},
	\end{align}
	where $\bm{y}_i \in \mathbb{R}^{p_i}$ with $p_i \in \mathbb{I}_{[1,n]}$ is considered as the system output of the $i$-th sensor with $\bm{C}_i \in \mathbb{R}^{p_i \times n}$ as known, such that the pair $(\bm{A}, \bm{C}_i)$ is observable.
	The measurement noise denoted by $\bm{v}_i$ is bounded by $\bar{v}_i \in \mathbb{R}_{>0}$, i.e., $\| \bm{v}_i(k) \| \le \bar{v}_i$. 
	
	\begin{remark}
		Notice that the dimension of the output states $\boldsymbol{y}_i$ in \eqref{eqn_measurement} is not strictly equal to the dimension of the state $\boldsymbol{x}_0$, which indicates the incompleteness of the observations obtained through the sensor measurements. 
	\end{remark}
	
	Given the challenging circumstance posed by the partial observations, the principal aim is to empower the multi-sensor distributed system with the capability to cooperatively infer both the system states $\bm{x}_0(k)$ and the unknown dynamics $\bm{f}(\cdot)$. 
	Let the state estimation of the system states $\bm{x}_0$ for sensor $i \in \mathcal{V}$ denote as $\bm{x}_i$ and function estimation $\bm{f}(\cdot)$ as $\hat{\bm{f}}_i(\cdot)$, respectively. 
	Then the desired performance defined as practical convergence is formulated as follows.
	\begin{definition}\label{def_convergence}
		The networked sensor system is said to achieve jointly practical convergence for the observation of $\bm{x}_0$ and the inference of $\bm{f}(\cdot)$, if
		\begin{align}
			\| \bm{x}(k) \!- \!\bm{1}_N \!\otimes \!\bm{x}_0(k) \| \le \epsilon_x 
			\wedge &\| \hat{\bm{f}}(\bm{x}(k)) \!-\! \bm{1}_N \! \otimes \!\bm{f}(\bm{x}_0(k)) \| \le \epsilon_f \nonumber
		\end{align}
		for $k >0 \bar{k}$ with small constants $\epsilon_x, \epsilon_f \in \mathbb{R}_{>0}$ and $\bar{k} \in \mathbb{N}_0$, where $\bm{x} \!=\! [\bm{x}_1^T, \!\cdots\!, \bm{x}_N^T]^T$, $\hat{\bm{f}}(\bm{x}) \!=\! [\hat{\bm{f}}_1^T(\bm{x}_1), \!\cdots\!, \hat{\bm{f}}_N^T(\bm{x}_N)]^T$ are the concatenated estimation for $\bm{x}_0$ and $\bm{f}(\bm{x}_0)$, respectively.
	\end{definition}
	For ease of notation, we assume that  $\bm{f}(\cdot)$ is scalar, i.e., $m=1$. Consequently, the matrix $\bm{B}$ is simplified to a vector $\bm{b} \in \mathbb{R}^n$.
	Note that the derived results in the following sections can be extended to the multi-dimensional case using multi-output machine learning techniques, by assuming each output is independent~\cite{Xu_TNNLS2020_Survey,yang_ACC2024_cooperative}.
	
	\subsection{Gaussian Process Regression}
	In order to infer the unknown function $f(\cdot)$, Gaussian process regression is employed, inducing a Gaussian distribution over function characterized by a kernel function $\kappa(\cdot,\cdot): \mathbb{R}^n \times \mathbb{R}^n \to \mathbb{R}_{\ge0}$ and a mean function $m(\cdot): \mathbb{R}^n \to \mathbb{R}_{\ge0}$.
	Specifically, the representation is formalized as ${f}(\cdot) \sim \mathcal{GP} (m(\cdot), \kappa(\cdot,\cdot))$. 
	The mean function embodies prior knowledge of the unknown function and is commonly set to zero in the absence of observable trends of the system in advance. 
	Moreover, the kernel function delineates the covariance between two samples and adheres to the following assumption.
	\begin{assumption} \label{assumption_f}
		The unknown function $f$ belongs to the reproduced kernel Hilbert space (RKHS) $\mathcal{H}_{\kappa}$ corresponding to a given stationary and differentiable Lipschitz-continuous kernel $\kappa(\cdot,\cdot)$ with Lipschitz constant $L_{\kappa} \in \mathbb{R}_{>0}$ induced by Euclidean norm.
		Moreover, the kernel satisfies $\kappa(\bm{\xi}, \bm{\xi}) = \sigma_f^2$ and $\sigma_f \in \mathbb{R}_+$ for $\bm{\xi} \in \mathbb{R}^n$.
		The induced inner product defined on RKHS denotes $\langle \cdot, \cdot \rangle_{\kappa}$, and the RKHS norm defined as $\| f \|_{\kappa}^2 = \sqrt{\langle f, f \rangle_{\kappa}}$ is bounded by $\Gamma \in \mathbb{R}_+$, i.e., {$\| f \|_{\kappa} \le \Gamma$}.
	\end{assumption}
	The RKHS norm $\| \cdot \|_{\kappa}$ indicates the smoothness of the function $f(\cdot)$, such that the existence of the upper bound $\Gamma$ is equivalent to the bounded derivative of $f(\cdot)$ requiring $f(\cdot)$ to be Lipschitz.
	Note that the Lipschitz system dynamics in \eqref{eq_dynamics} is a common prerequisite in continuous-time systems in reality, guaranteeing the dynamics \eqref{eq_dynamics} has a unique solution \cite{khalil2015nonlinear}. 
	For instance, common nonlinear dynamics in robotics and control, such as friction models, aerodynamic drag, and smooth manipulator gravitational/Coriolis terms, are Lipschitz continuous over compact operating regions, ensuring that Assumption~\ref{assumption_f} is satisfied.
	The upper bound $\Gamma$ of the RKHS norm can be approximated using the method in \cite{hashimoto2022learning}. 
	More recently, data-driven methods for over-estimating the RKHS norm bound have been developed in \cite{ Tokmak_AISTAT2025_Safe}, which leverage neural network approximators and collected data.
	Therefore, Assumption~\ref{assumption_f} does not impose restrictive conditions and is broadly satisfied in real-world dynamical systems.
	
	As modeling $f$ using GPs requires the capability to reconstruct the system state $\boldsymbol{x}_i$ and the value of $f(\boldsymbol{x}_i)$ using the output $\boldsymbol{y}_i$ from each sensor $i$, we introduce the data set $\mathbb{D}_i$ contains $M_i \in \mathbb{N}$ samples $\{(\bm{\xi}^{(\iota)}, \varphi^{(\iota)} )\}$ for $\iota = 1,\cdots,M_i$ described in the following assumption.
	\begin{assumption} \label{assumption_dataset}
		A scalar state $\varphi^{(\iota)}$ is an observation of $f(\cdot)$ measured at the state $\bm{\xi}^{(\iota)} \in \mathbb{R}^n$ for $\iota \in \mathbb{I}_{[1,M]}$, i.e., $\varphi^{(\iota)} \!=\! f(\bm{\xi}^{(\iota)}) \!+\! w^{(\iota)}$. And $w^{(\iota)}$ is the related observation noise bounded by a constant $\bar{w} \in \mathbb{R}_{>0}$, i.e., $| w^{(\iota)} | \le \bar{w}$ for $\forall \iota \in \mathbb{N}_0$.
	\end{assumption}
	Assumption~\ref{assumption_dataset} admits the presence of noise in the training samples, particularly $\varphi^{(\iota)}$. This mirrors the real-world scenario, where $\varphi^{(\iota)}$ is influenced by environmental disturbances and uncertainties. 
	The upper bound is derivable from the sensor measurements in \eqref{eqn_measurement} by considering the bounded nature of measurement noise, which is elucidated later in Theorem~\ref{theorem_dataset_noise}.

	Notably, extracting the information from the measurement $\bm{y}_i$ is inaccessible for some sensors, even though $(\bm{A}, \bm{C}_i)$ is observable due to the unknown function $f$.
	In order to address this issue, we present a data collection strategy for building the training data set for GPs as discussed in Section~\ref{subsection_data_collection}. 
	Before that, we first present an estimated function for $f(\cdot)$ using the data set $\mathbb{D}$ with complete states. 
	GP predicts the value of $f(\cdot)$ and point $\bm{\xi}$ as a Gaussian distribution with posterior mean $\mu(\cdot)$ and variance $\sigma^2(\cdot)$ as
	\begin{align}
		&\mu(\bm{\xi}) = \bm{k}_{\mathbb{D}}^T(\bm{\xi}) (\bm{K}_{\mathbb{D}} + \bar{w}^2 \bm{I}_M)^{-1} \bm{\varphi}, \\
		&\sigma^2(\bm{\xi}) = \kappa(\bm{\xi}, \bm{\xi}) - \bm{k}_{\mathbb{D}}^T(\bm{\xi}) (\bm{K}_{\mathbb{D}} + \bar{w}^2 \bm{I}_M)^{-1} \bm{k}_{\mathbb{D}}(\bm{\xi}),
	\end{align}
	where $\bm{k}_{\mathbb{D}}(\bm{\xi}) = [\kappa(\bm{\xi},\bm{\xi}^{(1)}), \cdots, \kappa(\bm{\xi},\bm{\xi}^{(M)})]^T \in \mathbb{R}^{M}$, $\bm{\varphi} = [\varphi^{1}, \cdots, \varphi^{M}]^T \in \mathbb{R}^{M}$ and $\bm{K}_{\mathbb{D}} = [\kappa(\bm{\xi}^{(i)},\bm{\xi}^{(j)})]_{i,j = 1,\cdots, M} \in \mathbb{R}^{M \times M}$.
	Therefore, the posterior mean $\mu(\cdot)$ is adopted to estimate $f(\cdot)$, and the posterior variance $\sigma^2(\cdot)$ quantifies the prediction error shown in the following lemma.
	\begin{lemma} [\!\!\! \cite{hashimoto2022learning}] \label{lemma_GP_error_bound}
		Assume the unknown function $f(\cdot)$ satisfying Assumption~\ref{assumption_f} and predicted by GP regression using data set $\mathbb{D}$ with $M$ training samples following Assumption~\ref{assumption_dataset}.
		Then, the prediction error is bounded as
		\begin{align}
			| \mu(\bm{\xi}) - f(\bm{\xi}) | \le \eta(\bm{\xi}) := \sqrt{\beta} \sigma(\bm{\xi}),  && \forall \bm{\xi} \in \mathbb{R}^n,
		\end{align}
		where $\beta = \Gamma^2 - \bm{\varphi}^T (\bm{K}_{\mathbb{D}} + \bar{w}^2 \bm{I}_M)^{-1} \bm{\varphi} + M$.
	\end{lemma}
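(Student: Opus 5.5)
We would prove the bound with the standard RKHS argument for kernel ridge regression with bounded noise, following the approach of \cite{hashimoto2022learning}. Write $\bm{K}_w := \bm{K}_{\mathbb{D}} + \bar{w}^2 \bm{I}_M$ and $\bm{w} = [w^{(1)},\cdots,w^{(M)}]^T$, so that $\bm{\varphi} = \bm{f}_{\mathbb{D}} + \bm{w}$ with $\bm{f}_{\mathbb{D}} = [f(\bm{\xi}^{(1)}),\cdots,f(\bm{\xi}^{(M)})]^T$. By Assumption~\ref{assumption_dataset}, $|w^{(\iota)}| \le \bar{w}$ for every $\iota$, hence $\|\bm{w}\|^2 \le M\bar{w}^2$.

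The key device is an augmented RKHS that absorbs the noise. On $\mathbb{R}^n \times \mathbb{I}_{[1,M]} \cup \{0\}$, or more simply on the product space $\mathcal{H}_\kappa \times \mathbb{R}^M$, we introduce the norm $\|(g,\bm{c})\|^2 := \|g\|_\kappa^2 + \|\bm{c}\|^2/\bar{w}^2$. Its reproducing kernel evaluated at the data "points" is exactly $\bm{K}_w$. The true pair $(f,\bm{w})$ interpolates the observations, i.e.\ $f(\bm{\xi}^{(\iota)}) + w^{(\iota)} = \varphi^{(\iota)}$. Its norm is bounded by
\begin{align}
\Gamma^2 + \|\bm{w}\|^2/\bar{w}^2 \le \Gamma^2 + M, \nonumber
\end{align}
which is where the $+M$ in $\beta$ comes from.

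Next, $\mu$ is identified as the $f$-component of the minimum-norm interpolant $h^\star$ of $\bm{\varphi}$ in this augmented space. That interpolant is the representer-theorem solution with coefficients $\bm{K}_w^{-1}\bm{\varphi}$, and its squared norm equals $\bm{\varphi}^T\bm{K}_w^{-1}\bm{\varphi}$.

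Because $h^\star$ is the orthogonal projection of $h := (f,\bm{w})$ onto the span of the data representers, the difference $h - h^\star$ is orthogonal to that span. Pythagoras then gives
\begin{align}
\|h - h^\star\|^2 = \|h\|^2 - \bm{\varphi}^T\bm{K}_w^{-1}\bm{\varphi} \le \Gamma^2 - \bm{\varphi}^T\bm{K}_w^{-1}\bm{\varphi} + M = \beta. \nonumber
\end{align}

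For a test point $\bm{\xi}$, we evaluate the $f$-component through the reproducing property:
\begin{align}
f(\bm{\xi}) - \mu(\bm{\xi}) = \langle h - h^\star, (\kappa(\cdot,\bm{\xi}), \bm{0}) \rangle. \nonumber
\end{align}
Since $h - h^\star$ is orthogonal to the data span, we may replace the representer $(\kappa(\cdot,\bm{\xi}),\bm{0})$ by its component orthogonal to that span. The squared norm of this residual is the Schur complement $\kappa(\bm{\xi},\bm{\xi}) - \bm{k}_{\mathbb{D}}^T(\bm{\xi})\bm{K}_w^{-1}\bm{k}_{\mathbb{D}}(\bm{\xi}) = \sigma^2(\bm{\xi})$. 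Cauchy--Schwarz then yields $|\mu(\bm{\xi}) - f(\bm{\xi})| \le \sqrt{\beta}\,\sigma(\bm{\xi})$, which is the claimed bound.

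The main obstacle is setting up the augmented Hilbert space correctly. We must check that the noise coordinates enter with weight $1/\bar{w}^2$, so that the Gram matrix is exactly $\bm{K}_{\mathbb{D}} + \bar{w}^2\bm{I}_M$ and the test representer has zero noise component. We must also verify that the projection onto the data span coincides with the GP posterior mean formula. These are bookkeeping checks, but the constant $\beta$ depends on them precisely.
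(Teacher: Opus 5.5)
Your proof is correct. With the inner product $\langle (g,\bm{c}),(g',\bm{c}')\rangle = \langle g,g'\rangle_{\kappa} + \bm{c}^T\bm{c}'/\bar{w}^2$, the $\iota$-th observation functional has representer $(\kappa(\cdot,\bm{\xi}^{(\iota)}), \bar{w}^2\bm{e}_\iota)$, where $\bm{e}_\iota$ is the $\iota$-th unit vector, so the Gram matrix is $\bm{K}_{\mathbb{D}}+\bar{w}^2\bm{I}_M$. The projection of $(f,\bm{w})$ onto the data span then has $\mathcal{H}_\kappa$-component equal to $\mu$. Pythagoras followed by Cauchy--Schwarz against the residual of $(\kappa(\cdot,\bm{\xi}),\bm{0})$ gives exactly $\sqrt{\beta}\,\sigma(\bm{\xi})$, and $\beta \ge 0$ comes for free.

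The paper gives no proof of its own and simply imports the bound from \cite{hashimoto2022learning}. Your augmented-space projection argument is the standard derivation of precisely this constant, so it serves as a self-contained proof of the cited result.
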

	Lemma~\ref{lemma_GP_error_bound} establishes a theoretical error bound in Gaussian processes, a methodology widely applied in safety-critical contexts, as exemplified in \cite{zhang2023safety}. While the coefficient $\beta$ scales with the dataset size, given by $\beta \le \Gamma^2 + M$, the work in \cite{williams2006gaussian} assures a concurrent reduction in the posterior variance $\sigma^2(\cdot)$.

	\section{Cooperative Learning with Partial Measurements}
	\label{section_observer_based_cooperative_learning}
	
	Within this section, we initially present the data collection strategy in Section~\ref{subsection_data_collection}. 
	Subsequently, we propose the COIN-GP algorithm in Section~\ref{subsection_learning_observer}, followed by a rigorous analysis of its learning performance in Section~\ref{subsection_performance_analysis}, ensuring the prediction guarantee for safety-critical applications.
	
	\subsection{Data Collection Strategy} 
	\label{subsection_data_collection}
	Owing to the challenge induced by the combination of incomplete measurements $\bm{y}_i$ of states $\bm{x}_0$ from sensor $i \in \mathcal{V}$ and the unknown dynamics $f(\cdot)$, the observability requirement of $(\bm{A}, \bm{C}_i)$ is insufficient, resulting in the inadequacy of usable training data pairs. 
	However, local observability of $(\bm{A}, \bm{C}_i)$ of every agent is not required when agents do not maintain individual GP models and instead rely on predictions received from their neighbors through the cooperative learning approach described in~\cref{subsection_learning_observer}. 
	Moreover, while the work~\cite{Yang_CCC2024_Kernel} addresses only a specific case, we need to consider arbitrarily structured matrices $\bm{A}$, $\bm{b}$, and $\bm{C}_i$ for constructing the datasets.
	To solve this problem, a condition for sufficient data acquisition is introduced, along with a corresponding data collection strategy and an analysis of noise propagation. 
	We present the property of observability for the sensors as follows.
	\begin{lemma} \label{lemma_linear_combination_CAn}
		Consider a system \eqref{eq_dynamics} with states $\bm{x}_0 \in \mathbb{R}^n$, which is measured by sensor $i$ following \eqref{eqn_measurement} with the output dimension $p_i$.
		If $(\bm{A}, \bm{C}_i)$ is observable, then there exists a set of matrices $\{\bm{H}_{i,d}\}_{d\in \mathbb{I}_{[0,n-1]}}$ such that
		\begin{align} \label{eqn_CAn_CAd}
			\bm{C}_i \bm{A}^n = \sum_{d=0}^{n-1} \bm{H}_{i,d} \bm{C}_i \bm{A}^d.
		\end{align}
	\end{lemma}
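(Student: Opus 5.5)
The plan is to use the observability matrix and a rank argument. Define the observability matrix of sensor $i$ as
\begin{align*}
\bm{O}_i = \big[ (\bm{C}_i)^T, (\bm{C}_i\bm{A})^T, \cdots, (\bm{C}_i\bm{A}^{n-1})^T \big]^T \in \mathbb{R}^{n p_i \times n}.
\end{align*}
Observability of $(\bm{A},\bm{C}_i)$ means $\mathrm{rank}(\bm{O}_i) = n$. Equivalently, the row space of $\bm{O}_i$ is all of $\mathbb{R}^{1\times n}$.

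The first step is to note that every row of $\bm{C}_i\bm{A}^n$ is a vector in $\mathbb{R}^{1\times n}$. It therefore lies in the row space of $\bm{O}_i$. Hence for each row index $r\in\mathbb{I}_{[1,p_i]}$ there is a coefficient row vector $\bm{h}_r \in \mathbb{R}^{1\times np_i}$ with $\bm{e}_r^T\bm{C}_i\bm{A}^n = \bm{h}_r \bm{O}_i$. Stacking these gives a matrix $\bm{H}_i \in \mathbb{R}^{p_i\times np_i}$ with $\bm{C}_i\bm{A}^n = \bm{H}_i\bm{O}_i$.

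Concretely, since $\bm{O}_i$ has full column rank, $\bm{O}_i^T\bm{O}_i$ is invertible. One can then take $\bm{H}_i = \bm{C}_i\bm{A}^n(\bm{O}_i^T\bm{O}_i)^{-1}\bm{O}_i^T$, the left-inverse construction. This satisfies $\bm{H}_i\bm{O}_i = \bm{C}_i\bm{A}^n$ by direct multiplication.

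The second step partitions $\bm{H}_i$ column-wise into $n$ blocks of size $p_i\times p_i$, namely $\bm{H}_i = [\bm{H}_{i,0},\cdots,\bm{H}_{i,n-1}]$. The block product $\bm{H}_i\bm{O}_i$ then expands to $\sum_{d=0}^{n-1}\bm{H}_{i,d}\bm{C}_i\bm{A}^d$, which gives \eqref{eqn_CAn_CAd}.

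As an alternative or sanity check, the Cayley–Hamilton theorem already gives $\bm{A}^n = -\sum_{d=0}^{n-1} c_d\bm{A}^d$, where the $c_d$ are the characteristic polynomial coefficients. So $\bm{H}_{i,d} = -c_d\bm{I}_{p_i}$ also works, and in fact without even needing observability. I would mention this in a remark, but present the observability-based construction, since that is the form the data collection strategy presumably exploits.

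No step here is a real obstacle. The only care needed is the block-dimension bookkeeping in the partition of $\bm{H}_i$, and noting that the choice of $\{\bm{H}_{i,d}\}$ is not unique when $np_i > n$.
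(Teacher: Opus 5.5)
Your proof is correct and follows essentially the paper's route. Observability gives $\mathrm{rank}(\bm{O}_i)=n$, so each row of $\bm{C}_i\bm{A}^n$ is a linear combination of the rows of $\bm{O}_i$; stacking these coefficients and partitioning them into $p_i\times p_i$ blocks yields the $\bm{H}_{i,d}$, and your left-inverse formula is one explicit solution.

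Your Cayley--Hamilton remark is also right: existence alone does not need observability. The non-uniqueness you note for $p_i\ge 2$ is exactly what the paper exploits to meet \eqref{eqn_data_collection_condition}. For sensor~3 in Table~\ref{table_sensor_configuration}, the Cayley--Hamilton choice $\bm{H}_{3,0}=\bm{0}$, $\bm{H}_{3,1}=\bm{I}_2$ gives two nonzero $\bm{\rho}_{3,d}$, whereas the tabulated choice leaves only one.
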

	\begin{IEEEproof}
		See the supplementary material.
	\end{IEEEproof}
	
	Considering the system dynamics defined in \eqref{eq_dynamics} along with the measurement function in \eqref{eqn_measurement}, the output $\bm{y}_i(k+n)$ for all $k \in \mathbb{N}$ is  expressed as
	\begin{align} \label{eqn_ykn}
		\bm{y}_i(k+n) &= \bm{C}_i \bm{A}^n \bm{x}_0(k) + \bm{v}_i(k+n) \nonumber \\
		&\quad + \bm{C}_i \sum_{d=0}^{n-1} \bm{A}^{n-d-1} \bm{b} f(\bm{x}_0(k+d)).
	\end{align}
	Combining the outcomes in Lemma~\ref{lemma_linear_combination_CAn},  \cref{eqn_ykn} is further reformulated as
	\begin{align}
		\bm{y}_i(k+n) &= \sum_{d=0}^{n-1} \bm{H}_{i,d} \bm{C}_i \bm{A}^d \bm{x}_0(k) + \bm{v}_i(k+n) \nonumber\\
		&\quad+ \bm{C}_i \sum_{d=0}^{n-1} \bm{A}^{n-d-1} \bm{b} f(\bm{x}_0(k+d)),
	\end{align}
	which is equivalent to
	\begin{align}
		\bm{y}_i(k+n) &= \sum_{d=0}^{n-1} \bm{H}_{i,d} \big( \bm{y}_i(k+d) - \bm{v}_i(k+d) \big) +\bm{v}_i(k+n) \nonumber \\
		&\quad- \sum_{d=1}^{n-1} \bm{H}_{i,d} \bm{C}_i \sum_{l=0}^{d-1} \bm{A}^{d-l-1} \bm{b} f(\bm{x}_0(k+l)) \nonumber \\
		&\quad+ \bm{C}_i \sum_{d=0}^{n-1} \bm{A}^{n-d-1} \bm{b} f(\bm{x}_0(k+d)).
	\end{align}
	Due to the fact that the double summation\footnote{The proof of \cref{eq_double_summation} is provided in the supplementary material.} 
	\begin{align}
		\label{eq_double_summation}
		&\sum_{d=1}^{n-1} \bm{H}_{i,d} \bm{C}_i \Big( \sum_{l=0}^{d-1} \bm{A}^{d-l-1} \bm{b} f(\bm{x}_0(k+l)) \Big) \nonumber\\
		& = \sum_{d=0}^{n-1} \bigg( \sum_{l=d+1}^{n-1} \bm{H}_{i,l} \bm{C}_i  \bm{A}^{l-d-1} \bigg) \bm{b}f(\bm{x}_0(k+d)),
	\end{align}
	then $\bm{y}_i(k+n)$ is written as
	\begin{align} \label{eqn_ykn_2}
		&\bm{y}_i(k+n) = \sum\limits_{d=0}^{n-1} \bm{H}_{i,d} \big(\bm{y}_i(k+d) \!-\! \bm{v}_i(k+d) \big)\!+ \!\bm{v}_i(k+n)  \nonumber\\
		&+\!\! \sum\limits_{d=0}^{n-1} \!\Big(\! \bm{C}_i\bm{A}^{n-d-1} \!\!-\!\!\! \sum\limits_{l=d+1}^{n-1} \!\! \bm{H}_{i,l} \bm{C}_i \bm{A}^{l-d-1} \Big) \bm{b} f(\bm{x}_0(k\!+\!d)). \!\!
	\end{align}
	For facilitating the construction of the training dataset and notional simplicity, we define an auxiliary state $\tilde{\bm{y}}_i(k)$ from measurements as 
	\begin{align}\label{eq_tilde_y}
		\tilde{\bm{y}}_i(k) = \bm{y}_i(k+n) - \sum_{d=0}^{n-1} \bm{H}_{i,d} \bm{y}_i(k+d)
	\end{align}
	and an auxiliary vector from matrices as 
	\begin{align}\label{eq_rho}
		\bm{\rho}_{i,d} = \big( \bm{C}_i\bm{A}^{n-d-1} - \sum_{l=d+1}^{n-1} \bm{H}_{i,l} \bm{C}_i \bm{A}^{l-d-1} \big) \bm{b}.
	\end{align}
	Based on the defined auxiliary variables \eqref{eq_tilde_y} and \eqref{eq_rho}, then one has
	\begin{align} \label{eqn_tilde_y}
		\tilde{\bm{y}}_i(k) &= \sum_{d=0}^{n-1} \Big(\bm{\rho}_{i,d} f(\bm{x}_0(k+d)) - \bm{H}_{i,d} \bm{v}_i(k+d) \Big) \nonumber\\
		&\quad+ \bm{v}_i(k+n).
	\end{align}
	
	To assess the feasibility of constructing a training dataset conducive to the inference of the unknown function $f$ and the estimated state vector $\boldsymbol{x}_0$ from measurements $\bm{y}_i$, we introduce the concept of collectability for a system. This concept indicates the sensor's capability in acquiring a training dataset $\mathbb{D}_i$ associated with the sensor $i \in \mathcal{V}$ that satisfies Assumption~\ref{assumption_dataset}, which is described in the following definition.
	
	\begin{definition}\label{def_collect}
		The collectability of the training dataset of a system $i\in\mathcal{V}$ is affirmed if and only if its corresponding dataset $\mathbb{D}_i$ can be established by obtaining measurements $\bm{y}_i$ and satisfies \cref{assumption_dataset}.
	\end{definition}
	
	Considering Definition~\ref{def_collect}, the sufficient condition of the collectability of a system is given as follows. 
	\begin{lemma} \label{lemma_data_collectable}
		Consider the system \eqref{eq_dynamics} measured by sensor $i \in \mathcal{V}$ following \eqref{eqn_measurement}.
		If $(\bm{A}, \bm{C}_i)$ is observable and there exists one $d^*_i \in \mathbb{I}_{[0, n-1]}$ such that
		\begin{align} \label{eqn_data_collection_condition}
			\bm{\rho}_{i,d} \begin{cases}
				\ne \bm{0}_{p_i \times 1}, & \text{if}~ d = d^*_i \\
				= \bm{0}_{p_i \times 1}, & \text{otherwise}
			\end{cases}, ~~d\in\mathbb{I}_{0,n-1},
		\end{align}
		then the data set $\mathbb{D}_i$ satisfying Assumption~\ref{assumption_dataset} is obtainable and constructible.
	\end{lemma}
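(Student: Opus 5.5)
The plan is to construct the samples of $\mathbb{D}_i$ explicitly from the identity \eqref{eqn_tilde_y}, and then to check the two requirements of Assumption~\ref{assumption_dataset}: a usable input state and a bounded output noise.

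First, Lemma~\ref{lemma_linear_combination_CAn} applies because $(\bm{A},\bm{C}_i)$ is observable, so the matrices $\bm{H}_{i,d}$ exist. Both $\tilde{\bm{y}}_i(k)$ in \eqref{eq_tilde_y} and $\bm{\rho}_{i,d}$ in \eqref{eq_rho} are then computable from the measurements $\bm{y}_i(k),\dots,\bm{y}_i(k+n)$ and the known matrices. Under condition \eqref{eqn_data_collection_condition}, every term of the sum in \eqref{eqn_tilde_y} vanishes except $d=d^*_i$. This leaves
\begin{align}
\tilde{\bm{y}}_i(k) = \bm{\rho}_{i,d^*_i} f(\bm{x}_0(k+d^*_i)) - \sum_{d=0}^{n-1}\bm{H}_{i,d}\bm{v}_i(k+d) + \bm{v}_i(k+n).
\end{align}
Since $\bm{\rho}_{i,d^*_i}\neq\bm{0}$, we can project with its pseudo-inverse $\bm{\rho}_{i,d^*_i}^{\dagger} = \bm{\rho}_{i,d^*_i}^T/\|\bm{\rho}_{i,d^*_i}\|^2$. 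The output label is then defined as $\varphi = \bm{\rho}_{i,d^*_i}^{\dagger}\tilde{\bm{y}}_i(k) = f(\bm{x}_0(k+d^*_i)) + w$. Its noise $w$ is the same projection applied to the noise terms. Using $\|\bm{v}_i\|\le\bar{v}_i$ and the triangle inequality, it is bounded by $\bar{w}_i = \bar{v}_i\big(1+\sum_{d}\|\bm{H}_{i,d}\|\big)/\|\bm{\rho}_{i,d^*_i}\|$, which is the bounded-noise requirement of Assumption~\ref{assumption_dataset}.

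Second, the corresponding input $\bm{\xi} = \bm{x}_0(k+d^*_i)$ must be recovered from the outputs. Here observability and the single-nonzero structure are used together. Stack the outputs $\bm{y}_i(k),\dots,\bm{y}_i(k+n-1)$. This gives $\mathcal{O}_i\bm{x}_0(k)$ plus terms $\bm{C}_i\bm{A}^{j-l-1}\bm{b}f(\bm{x}_0(k+l))$ plus noise, where $\mathcal{O}_i$ is the observability matrix. The unknown $f$-values appearing in these terms are the obstacle.

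I would handle this by a sliding-window, recursive argument. Labels $f(\bm{x}_0(k+d^*_i))$ are obtainable, up to bounded noise, for every $k$. Hence all values $f(\bm{x}_0(k'))$ for $k'\ge d^*_i$ can be treated as known noisy quantities. For a window starting at $k\ge d^*_i$, subtract their contributions from the stacked outputs and invert the full-column-rank $\mathcal{O}_i$, e.g. $\bm{x}_0(k) = \mathcal{O}_i^{\dagger}(\cdot)$. This yields an estimate of $\bm{x}_0(k)$ with error bounded linearly in $\bar{v}_i$ and the label noise. Propagating through \eqref{eq_dynamics} with the already-estimated $f$-values then gives $\bm{x}_0(k+d^*_i)$.

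This recursion, and the resulting input-noise propagation, is the main obstacle. Assumption~\ref{assumption_dataset} allows noise only on $\varphi$, so input error must be converted into output error. I would absorb it using Lipschitz continuity of $f$, which is implied by Assumption~\ref{assumption_f}: $|f(\hat{\bm{\xi}})-f(\bm{\xi})|\le L_f\|\hat{\bm{\xi}}-\bm{\xi}\|$. This enlarges $\bar{w}$ by a finite constant. The pair $(\hat{\bm{\xi}},\varphi)$ then satisfies Assumption~\ref{assumption_dataset}, and repeating the construction over $k$ builds $\mathbb{D}_i$. The explicit constant is what Theorem~\ref{theorem_dataset_noise} later quantifies.
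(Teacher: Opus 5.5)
Your construction is essentially the one behind \eqref{eqn_phi_main}--\eqref{eqn_xi_main}. Projecting with $\bm{\rho}_{i,d^*_i}^{\dagger}$ is the choice $\bm{t}_i=\bm{\rho}_{i,d^*_i}$, which by Cauchy--Schwarz minimizes the factor $\|\bm{t}_i\|/|\bm{t}_i^T\bm{\rho}_{i,d^*_i}|$ in $\bar{w}_{i,1}$. Subtracting $\bm{G}_i$ times the label window and applying the pseudo-inverse of $\bm{O}_i$ is \eqref{eqn_xi_main} with $\bm{T}_i=\bm{O}_i^T$.

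Your forward propagation through \eqref{eq_dynamics} is a harmless but unnecessary detour. Every $f(\bm{x}_0(k'))$ with $k'\ge d^*_i$ is already available as a noisy label. The paper therefore indexes the label by the time at which $f$ is evaluated, computing $\varphi_i(k)$ from $\tilde{\bm{y}}_i(k-d^*_i)$. It then reconstructs $\bm{x}_0(k)$ from the window $\bm{Y}_i(k)$ starting at that same $k$. This avoids the extra amplification by $\bm{A}^{d^*_i}$ and the accumulated label noise, and gives the clean constant $\bar{w}_{i,2}$.

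One step is not right as stated: Assumption~\ref{assumption_f} does not imply that $f$ is Lipschitz. What you actually get from the reproducing property, the Lipschitz kernel and $\kappa(\bm{\xi},\bm{\xi})=\sigma_f^2$ is
\[
|f(\bm{\xi})-f(\bm{\xi}')| \le \Gamma\sqrt{2\kappa(\bm{\xi},\bm{\xi})-2\kappa(\bm{\xi},\bm{\xi}')} \le \sqrt{2L_\kappa}\,\Gamma\sqrt{\|\bm{\xi}-\bm{\xi}'\|},
\]
which is only a square-root modulus of continuity. For example, the kernel $\exp(-\|\bm{\xi}-\bm{\xi}'\|^{3/2})$ satisfies Assumption~\ref{assumption_f}, yet its RKHS contains non-Lipschitz functions. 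This is why Theorem~\ref{theorem_dataset_noise} contains $L_f\sqrt{\bar{w}_{i,2}}$ with $L_f=\sqrt{2L_\kappa}\Gamma$, rather than a term linear in the input reconstruction error. The lemma only needs some finite bound on the label noise, so replacing your Lipschitz estimate by this one repairs the argument without changing its structure.
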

	\begin{IEEEproof}
		See the supplementary material.
	\end{IEEEproof}
	Specifically, with $\bm{t}_i \in \mathbb{R}^{p_i}$ satisfying $\bm{t}_i^T \bm{\rho}_{i,d^*_i} \ne 0$ and $\bm{T}_i \in \mathbb{R}^{n \times n p_i}$ rendering $\bm{T}_i \boldsymbol{O}_{i}$ invertible, the training pair is constructed from the measurements as
	\begin{align}
		\varphi_i(k) &= \big( \bm{t}_i^T \bm{\rho}_{i,d^*_i} \big)^{-1} \bm{t}_i^T \tilde{\bm{y}}_i(k - d^*_i) = f(\bm{x}_0(k)) + w_{i,1}(k), \label{eqn_phi_main} \\
		\bm{\xi}_i(k) &= \big( \bm{T}_i \boldsymbol{O}_{i} \big)^{-1} \bm{T}_i \big( \bm{Y}_i(k) - \boldsymbol{G}_{i} \bm{\varphi}_i(k) \big) = \bm{x}_0(k) + \bm{w}_{i,2}(k), \label{eqn_xi_main}
	\end{align}
	where $\bm{Y}_i(k) = [\bm{y}_i^T(k), \cdots, \bm{y}_i^T(k+n-1)]^T$, $\bm{\varphi}_i(k) = [\varphi_i(k), \cdots, \varphi_i(k+n-2)]^T$, $\boldsymbol{G}_{i}$ is the block lower-triangular Toeplitz matrix whose $(r,c)$-th block is $\bm{C}_i \bm{A}^{r-c-1} \bm{b}$ for $r > c$ and $\bm{0}_{p_i \times 1}$ otherwise, and the transformed noises $w_{i,1}(k)$, $\bm{w}_{i,2}(k)$ collect the measurement noise contributions.
	It is worth mentioning that the sensor measurements and the training datasets of individual systems are not transmitted among the interconnected subsystems, thereby alleviating communication overhead. Moreover, despite the inability of all systems to assemble the data set $\mathbb{D}_i$, cooperative learning mechanisms are leveraged to mitigate this limitation. Through cooperative learning, neighboring systems contribute to compensating for the unavailability of data by sharing predictions derived from GP models trained on non-empty datasets, which is elaborated in Section~\ref{subsection_learning_observer}.

	\begin{remark}\label{remark_special_cases}
		Condition \eqref{eqn_data_collection_condition} indicates the auxiliary variable $\tilde{\bm{y}}(k)$ can be written as the function of $f(\bm{x}(k+d_i^*))$, and is irrelevant to the unknown function $f(\bm{x}(\cdot))$ and system state $\bm{x}$ at other time steps, which means the value of $f(\bm{x}(k))$ can be recovered using only $\tilde{\bm{y}}(k-d_i^*)$ as shown in \eqref{eqn_phi_main}.
		Note that this condition depends on the selection of $\bm{H}_{i,d}$, which may have multiple choices when $p_i > 1$.
		For sensors whose valid $\bm{H}_{i,d}$ for collectability check is hard to obtain, we assume they are not collectable and apply the cooperative learning strategy to enhance their prediction performance.
		
		The geometric picture is that observability ensures the observation space ``fully captures'' the state space dynamics, making the redundancy encoding via $\{\bm{H}_{i,d}\}$ possible. 
		For each row $j \in \mathbb{I}_{[1,p_i]}$ of $\bm{C}_i \bm{A}^n$, solve the linear system:
		$$
		\bm{g}_j \bm{C}_i \bm{A}^n = \sum_{d=0}^{n-1} \sum_{q=1}^{p_i} h_{d,q}^{j,i} \bm{g}_q \bm{C}_i \bm{A}^d.
		$$
		Find coefficients $\{h_{d,q}^{j,i}\}$ such that the $j$-th row of $\bm{C}_i \bm{A}^n$ is expressed as a linear combination of rows from $\boldsymbol{O}_i$. Stack these into matrices $\bm{H}_{i,d} \in \mathbb{R}^{p_i \times p_i}$ as defined in the proof.
		
		Since the data collection condition in Lemma~\ref{lemma_data_collectable} is a sufficient condition, to find such $\bm{H}_{i,d}$ with $d = 0, \cdots, n-1$ satisfying \eqref{eqn_data_collection_condition} is nontrivial.
		Nevertheless, the search is not exhaustive. For a fixed candidate index $d^*_i$, both \eqref{eqn_CAn_CAd} and the vanishing conditions $\bm{\rho}_{i,d} = \bm{0}$, $d \ne d^*_i$, are linear in the entries of $\{\bm{H}_{i,d}\}$, so collectability reduces to a linear feasibility problem with $n p_i^2$ unknowns and $p_i (2n-1)$ equations over the $n$ candidates $d^*_i$, which can be solved before deployment.
		Since this problem is overdetermined for $p_i = 1$ and underdetermined for $p_i \ge 2$, multi-output sensors generically admit a valid $\{\bm{H}_{i,d}\}$, whereas scalar-output sensors are collectable only for structured $\bm{C}_i$.
		In this light, we discuss some special cases.
		\begin{itemize}
			\item For single output measurement with $p_i = 1$, the matrices $\bm{H}_{i,d}$ degraded to scalar are unique, since $\boldsymbol{O}_{i} \in \mathbb{R}^{n \times n}$.
			Then, the evaluation of \eqref{eqn_data_collection_condition} is fast for unique $H_{i,d}$, but no freedom is left to enforce \eqref{eqn_data_collection_condition}.
			\item For a discrete-time canonical system in control and observation with sampling time $T \in \mathbb{R}_+$ as
			\begin{align}
				&\bm{A} = \begin{bmatrix}
					\bm{I}_{n-1} & \bm{0}_{(n-1) \times 1} \\
					\bm{0}_{1 \times (n-1)} & 0
				\end{bmatrix} + \begin{bmatrix}
					\bm{0}_{(n-1) \times 1} & T \bm{I}_{n-1} \\
					0 & \bm{0}_{1 \times (n-1)}
				\end{bmatrix}, \nonumber \\
				&\bm{b} = [\bm{0}_{1 \times (n-1)}, 1]^T, \bm{C}_i = [1,\bm{0}_{1 \times (n-1)}], \nonumber
			\end{align}
			it has $\bm{C}_i \bm{A}^d \bm{b} = 0$ for $d = 0, \cdots, n-2$ and $\bm{C}_i \bm{A}^{n-1} \bm{b} = 1$ satisfying \eqref{eqn_data_collection_condition} without considering $\bm{H}_{i,d}$.
		\end{itemize}
	\end{remark}

	\begin{remark}\label{remark_delay}
		Considering \eqref{eq_tilde_y} and \eqref{eqn_phi_main}, obtaining $\bm{\varphi}_i(k)$ for computing $\bm{\xi}_i(k)$ via \eqref{eqn_xi_main} requires access to measurements $\bm{y}_i(k), \cdots, \bm{y}_i(k + 2n - 1)$. Consequently, under the worst-case scenario, acquiring the training data pair $\{\bm{\xi}_i(k), \varphi_i(k)\}$ demands a sequential time interval of $k + 2n - 1$ steps. However, when $\boldsymbol{C}_i$ is an identity matrix, it is straightforward to derive that $\bm{\xi}_i(k)=\boldsymbol{y}_i(k)$ and $\varphi_i(k)= \boldsymbol{b}^T (\boldsymbol{y}(k+1)-\boldsymbol{A}\boldsymbol{y}(k))/(\boldsymbol{b}^T \boldsymbol{b})$.
	\end{remark}

	Considering the process of derivation of Lemma~\ref{lemma_data_collectable}, for a system $i\in\mathcal{V}$ satisfying \eqref{eqn_data_collection_condition}, the approach for data collection is summarized in Algorithm~\ref{algorithm_data_collection}.
	
	\begin{algorithm} [t]
		\caption{Data acquisition on subsystem $i$}
		\label{algorithm_data_collection}
		\begin{algorithmic}[1]
			\STATE Condition: $(\bm{A}, \bm{C}_i)$ is observable satisfying \eqref{eqn_data_collection_condition};
			\STATE Initialization: $\mathbb{D}_i = \emptyset$, $\bm{Y}_i = \emptyset$, $\bm{\varphi}_i = \emptyset$, $k = 0$;
			\WHILE{Terminate condition for collection is not satisfied}
			\STATE Collect $\bm{y}(k)$;
			\IF{$| \bm{Y}_i | < n$} 
			\STATE $\bm{Y}_i \leftarrow [ \bm{Y}_i, \bm{y}_i(k) ]$ 
			\ELSE
			\STATE $\bm{Y}_i \leftarrow [\bm{Y}_i(p_i+1:n p_i); \bm{y}_i(k) ]$;
			\STATE Calculate $\varphi_i(k - n + d_i^*)$ from \eqref{eqn_phi_main};
			\STATE $\bm{\varphi}_i \leftarrow [\bm{\varphi}_i, \varphi_i(k - n + d_i^*)]$
			\IF{$| \bm{\varphi}_i | \ge n - 1$}
			\STATE Calculate $\bm{\xi}_i(k - n + 1)$ from \eqref{eqn_xi_main};
			\STATE $\mathbb{D}_i \leftarrow \{ \mathbb{D}_i, \{\bm{\xi}_i(k - n + 1), \varphi_i(k - n + d_i^*)\} \}$;
			\ENDIF
			\ENDIF
			\STATE $k \leftarrow k + 1$
			\ENDWHILE
		\end{algorithmic}
	\end{algorithm}
	
	\begin{remark}
		Algorithm~\ref{algorithm_data_collection} allows online learning for each agent satisfying \eqref{eqn_data_collection_condition} by adding new samples into the training data set, which continuously improves the observation and prediction performance of $\bm{x}_0$ and $f(\cdot)$ during operation as detailed in Section~\ref{subsection_learning_observer}.
		Since one valid training pair requires measurements spanning up to $2n-1$ steps, the newest sample in $\mathbb{D}_i$ lags the current state by up to $2n-1$ steps; as $f(\cdot)$ is time-invariant, such delayed pairs remain valid samples in the sense of Assumption~\ref{assumption_dataset}, and the delay only postpones the arrival of samples without altering their distribution along the trajectory of $\bm{x}_0$.
		Consequently, the delay affects only the posterior variance $\sigma_i(\bm{x}_i(k))$ in the error bound of Section~\ref{subsection_performance_analysis}, which grows at most linearly with the distance travelled by $\bm{x}_0$ within $2n-1$ steps, and hence with $n$ for fast-varying systems, when $\bm{x}_0$ enters a region not yet covered by data, while the influence of the delay vanishes in revisited regions, since the GP exploits the entire data set rather than the most recent sample.
		The ultimate boundedness of the observation and prediction errors in Theorem~\ref{theorem_tracking_error_bound} is unaffected, as it holds for any $\mathbb{D}_i$ satisfying Assumption~\ref{assumption_dataset}, including an empty one.
	\end{remark}
	
	Given the collectible dataset, we now investigated the noise bound of the measurements for $f$ in the following theorem.
	\begin{theorem} \label{theorem_dataset_noise}
		Considering the dynamical system \eqref{eq_dynamics} and the sensors $i \in \mathcal{V}$ equipped with measurement systems described by \eqref{eqn_measurement}, the collection of the dataset $\mathbb{D}_i$ satisfies the conditions in Lemma~\ref{lemma_data_collectable}.
		The dataset $\mathbb{D}_i$ is collected through Algorithm~\ref{algorithm_data_collection} for GP prediction of $f(\cdot)$ under Assumption~\ref{assumption_f}, then Assumption~\ref{assumption_dataset} holds for $\mathbb{D}_i$ with the noise $w_i$, which is bounded by
		\begin{align} \label{eqn_upperbound_w}
			| w_i(k) | \le \bar{w}_i = \bar{w}_{i,1} + L_f \sqrt{\bar{w}_{i,2}}, && \forall k \in \mathbb{N},
		\end{align}
		where $L_f = \sqrt{2 L_{\kappa}} \Gamma$, and
		\begin{align}
			&\bar{w}_{i,1} = \big| \bm{t}_i^T \bm{\rho}_{i,d^*_i} \big|^{-1} \| \bm{t}_i \| \Big( 1 + \sum_{d=0}^{n-1} \| \bm{H}_{i,d} \| \Big) \bar{v}_i, \\
			&\bar{w}_{i,2} = \| \big( \bm{T}_i \boldsymbol{O}_{i} \big)^{-1} \bm{T}_i \| \big( \| \boldsymbol{G}_{i} \| \sqrt{n\!-\!1} \bar{w}_{i,1} + \sqrt{n} \bar{v}_i \big).
		\end{align}
		with $\bm{t}_i$ and $\bm{T}_i$ chosen according to \eqref{eqn_phi_main} and \eqref{eqn_xi_main}, respectively.
	\end{theorem}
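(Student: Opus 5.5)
The noise is handled in two stages, following the two constructions \eqref{eqn_phi_main} and \eqref{eqn_xi_main}. The sample $\varphi_i(k)$ is treated as a noisy label and $\bm{\xi}_i(k)$ as a noisy input. The total label noise with respect to the clean sample $(\bm{\xi}_i(k), f(\bm{\xi}_i(k)))$ is then written as
\begin{align*}
w_i(k) = \varphi_i(k) - f(\bm{\xi}_i(k)) = w_{i,1}(k) + \big( f(\bm{x}_0(k)) - f(\bm{\xi}_i(k)) \big).
\end{align*}
By the triangle inequality it suffices to bound $|w_{i,1}(k)|$ by $\bar{w}_{i,1}$ and the second term by $L_f \sqrt{\bar{w}_{i,2}}$.

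\emph{Step 1.} Substitute \eqref{eqn_tilde_y} evaluated at $k-d_i^*$ into \eqref{eqn_phi_main}. Under condition \eqref{eqn_data_collection_condition}, only the term $\bm{\rho}_{i,d_i^*} f(\bm{x}_0(k))$ survives. Hence
\begin{align*}
w_{i,1}(k) = \big(\bm{t}_i^T\bm{\rho}_{i,d_i^*}\big)^{-1}\bm{t}_i^T\Big(\bm{v}_i(k-d_i^*+n) - \sum_{d=0}^{n-1}\bm{H}_{i,d}\bm{v}_i(k-d_i^*+d)\Big).
\end{align*}
Cauchy--Schwarz, submultiplicativity and $\|\bm{v}_i\|\le\bar{v}_i$ then give exactly $\bar{w}_{i,1}$.

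\emph{Step 2.} Stack the outputs over $n$ steps using \eqref{eq_dynamics} and \eqref{eqn_measurement}. This gives
\begin{align*}
\bm{Y}_i(k) = \boldsymbol{O}_i\bm{x}_0(k) + \boldsymbol{G}_i[f(\bm{x}_0(k)),\dots,f(\bm{x}_0(k+n-2))]^T + \bm{V}_i(k),
\end{align*}
where $\bm{V}_i(k)$ stacks the measurement noises. Replacing the true $f$ values by $\bm{\varphi}_i(k)$ introduces the error $-\boldsymbol{G}_i\bm{w}_{i,1}$, where $\bm{w}_{i,1}\in\mathbb{R}^{n-1}$ stacks the label noises and so has norm at most $\sqrt{n-1}\,\bar{w}_{i,1}$. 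Multiplying by $(\bm{T}_i\boldsymbol{O}_i)^{-1}\bm{T}_i$ gives
\begin{align*}
\bm{w}_{i,2}(k) = (\bm{T}_i\boldsymbol{O}_i)^{-1}\bm{T}_i\big(\bm{V}_i(k) - \boldsymbol{G}_i\bm{w}_{i,1}\big),
\end{align*}
with $\|\bm{V}_i\|\le\sqrt{n}\,\bar{v}_i$. This yields $\|\bm{w}_{i,2}(k)\|\le\bar{w}_{i,2}$.

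\emph{Step 3 (main obstacle).} This step converts the input perturbation into a label perturbation using only the RKHS assumption; the square root in the statement signals how. By the reproducing property and Assumption~\ref{assumption_f},
\begin{align*}
|f(\bm{x})-f(\bm{x}')| = |\langle f,\kappa(\cdot,\bm{x})-\kappa(\cdot,\bm{x}')\rangle_\kappa| \le \Gamma\,\|\kappa(\cdot,\bm{x})-\kappa(\cdot,\bm{x}')\|_\kappa.
\end{align*}
Expanding the norm gives $\kappa(\bm{x},\bm{x})+\kappa(\bm{x}',\bm{x}')-2\kappa(\bm{x},\bm{x}')$. Each difference $\kappa(\bm{x},\bm{x})-\kappa(\bm{x},\bm{x}')$ is bounded by $L_\kappa\|\bm{x}-\bm{x}'\|$, and likewise for $\kappa(\bm{x}',\bm{x}')-\kappa(\bm{x},\bm{x}')$. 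The squared norm is therefore at most $2L_\kappa\|\bm{x}-\bm{x}'\|$, so
\begin{align*}
|f(\bm{x})-f(\bm{x}')| \le \sqrt{2L_\kappa}\,\Gamma\sqrt{\|\bm{x}-\bm{x}'\|} = L_f\sqrt{\|\bm{x}-\bm{x}'\|}.
\end{align*}
Apply this with $\bm{x}=\bm{x}_0(k)$ and $\bm{x}'=\bm{\xi}_i(k)$ and use Step 2. The care needed here is to make the Lipschitz constant of $\kappa$ refer to one argument with the other fixed, which is what stationarity provides. Combining with Step 1 gives \eqref{eqn_upperbound_w} for all $k$. Hence Assumption~\ref{assumption_dataset} holds with $\bar{w}=\bar{w}_i$ for the samples produced by Algorithm~\ref{algorithm_data_collection}.
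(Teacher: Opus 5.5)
Your proposal is correct and takes essentially the same route as the paper. You split $w_i(k)=\varphi_i(k)-f(\bm{\xi}_i(k))$ into the label noise $w_{i,1}(k)$, bounded using condition \eqref{eqn_data_collection_condition} and \eqref{eqn_tilde_y}, and the input-induced term $f(\bm{x}_0(k))-f(\bm{\xi}_i(k))$, bounded using $\|\bm{w}_{i,2}(k)\|\le\bar{w}_{i,2}$ together with the H\"older-type estimate $|f(\bm{x})-f(\bm{x}')|\le\Gamma\sqrt{2L_\kappa\|\bm{x}-\bm{x}'\|}$ from the reproducing property, which is exactly what produces $L_f=\sqrt{2L_\kappa}\,\Gamma$ and the $\sqrt{\bar{w}_{i,2}}$ in the stated bound.
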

	\begin{IEEEproof}
		See the supplementary material.
	\end{IEEEproof}

	\subsection{Observer-based Cooperative Learning}
	\label{subsection_learning_observer}
	To address the potential absence of some training datasets for GP models, we introduce the COIN-GP approach for the estimated function of the system $i \in \mathcal{V}$, expressed as
	\begin{align}
		\label{eqn_oberver_based_prediction}
		\hat{f}_i(k+1) &= \gamma_1 \sum_{j = 1}^N \tilde{a}_{i,j}(k) ( \hat{f}_i(k) - \hat{f}_j(k) )  \\
		&+ \gamma_2 \varpi_{i}(k) (\hat{f}_i(k) - \mu_i(\bm{x}_i(k))) + \mu_i(\bm{x}_i(k+1)), \nonumber
	\end{align}
	where $\gamma_1, \gamma_2 \in \mathbb{R}$ are estimation gains to be designed later, $\hat{f}_i(k)$ represents the estimation of $f(\boldsymbol{x}_i(k))$, $\forall k \in \mathbb{N}_{0}$ for simplicity.
	\begin{align}
		\label{eq_stateEsitmation}
		\bm{x}_i(k+1) = \bm{A} \bm{x}_i(k) + \bm{b} \hat{f}_i(k) + \bm{L}_i (\bm{C}_i \bm{x}_i(k)  \!- \! \bm{y}_i(k)). 
	\end{align}
	
	\begin{remark}
		The communication graph is assumed undirected because the proposed framework relies on bidirectional information exchange between neighboring agents. 
		This requirement can be relaxed by utilizing a directed graph in which each agent only broadcasts its own predictions, though this typically leads to degraded performance. 
	\end{remark}

	Specifically, $\bm{L}_i \in \mathbb{R}^{n \times p_i}$ denotes the observer gain matrix selected such that $\bm{A} + \bm{L}_i \bm{C}_i$ is Schur, that is, all eigenvalues lie strictly within the unit circle in the complex plane. 
	The design of $\bm{L}_i$ is performed locally at each node, so that the observer gain design is implemented in a fully distributed manner.
	If $(\bm{A}, \bm{C}_i)$ is observable, such a matrix $\bm{L}_i$ always exists and can be obtained using Ackermann's formula with given desired eigenvalues.
	The local prediction from the GP model for system $i$ is denoted by $\mu_i(\cdot)$, leveraging the individual dataset $\mathbb{D}_i$ if available. 
	The gains $\gamma_1, \gamma_2 \in \mathbb{R}$ in \eqref{eqn_oberver_based_prediction} are designed constants to balance the weights of the cooperative and individual learning. 
	The consensus weights $\tilde{a}_{i,j}$ and $\varpi_{i}$ are designed as
	\begin{align} \label{eqn_consensus_weight}
		\tilde{a}_{i,j}(k) \!=\! \frac{a_{i,j} \sigma_i^2(\bm{x}_i(k))}{\sigma_j^2(\bm{x}_j(k)) \!+\! \bar{w}_i^2}, && \varpi_{i}(k) \!=\! \frac{\sigma_i^2(\bm{x}_i(k))}{\sigma_{f,i}^{2}},
	\end{align}
	where $a_{i,j}$ is the entry of the adjacent matrix $\boldsymbol{\mathcal{A}}$ at the $i$-th row and $j$-th column.
	Therefore, the computation outlined in \eqref{eqn_oberver_based_prediction} solely relies on neighboring information. 
	Consequently, a distributed computation framework is realized, wherein the scalability of the proposed methodology is attainable, rendering it well-suited for large-scale sensor networks.
	Moreover, \eqref{eqn_consensus_weight} ensure bounded consensus weights, i.e., $\tilde{a}_{i,j}(k) \in \{0, \sigma_{f,i}^{2} / \bar{w}_i^2 \}$ and $\varpi_{i}(k) \in (0,1]$.
	In the early exploration phase, all posterior variances remain close to the priors, so \eqref{eqn_consensus_weight} reduces to uniform-weight cooperation.
	As data accumulate asymmetrically, a data-rich agent $i$ assigns a small weight $\tilde{a}_{i,j}$ to a data-poor neighbor $j$ while the latter keeps a large weight $\tilde{a}_{j,i}$, so that on a connected graph the predictions of agents without data follow those of data-rich agents, directly or through intermediate agents.
	
	\begin{remark}
		The evaluation of the proposed estimator in \eqref{eqn_oberver_based_prediction} on agent $i \in \mathcal{V}$ only requires the information exchange for the estimated prediction $\hat{f}_j(k)$ and posterior variance $\sigma_j^2(\bm{x}_j(k))$ from $j \in \mathcal{N}_i$.
		Without sharing the training samples or the local prediction $\mu_j(\bm{x}_j(k))$, the proposed method in \eqref{eqn_oberver_based_prediction} protects the data privacy of the systems.
		Moreover, the proposed method saves the computation of $\mu_j(\bm{x}_i)$ required for the conventional aggregation strategy used in MAS as \cite{yang_CDC2021_Distributed,dai_EJC2024_Decentralized,dai_TNNLS2025_Cooperative}.
	\end{remark}
	
	Throughout this work, the inter-agent communication is assumed synchronous and delay-free: at every sampling instant, each agent exchanges its current prediction and posterior variance with its neighbors before the updates are evaluated.
	For imperfect timing, COIN-GP can be integrated with delay-compensation strategies~\cite{Doostmohammadian_SCL2025_Distributed}, asynchronous distributed GP updates~\cite{Yang_AAAI2025_Asynchronous}, or dynamic average consensus~\cite{Lederer_TAC2023_Cooperative}, which is left for future work.

	\subsection{Performance Analysis}
	\label{subsection_performance_analysis}
	To analyze the estimation and identification performance for system states $\bm{x}_0$ and dynamics $f(\cdot)$, we define the their associated prediction and observation error for $f(\cdot)$ and $\bm{x}_0$ as
	\begin{align}
		e_{i}^f(k) = \hat{f}_i(k) - f(\bm{x}_0(k)), && \bm{e}_{i}^s(k) = \bm{x}_i(k) - \bm{x}_0(k),
	\end{align}
	respectively.
	Combining the system dynamics~\eqref{eq_dynamics} and output observation~\eqref{eqn_measurement} with estimated function~\eqref{eqn_oberver_based_prediction}, the dynamics of prediction and observation error are reformulated as
	\begin{align}
		e_{i}^f(k+1) &= \gamma_1 \sum_{j = 1}^N \tilde{a}_{i,j}(k) ( e_{i}^f(k) - e_{j}^f(k) )  \\
		&\quad+ \gamma_2 \varpi_{i,i}(k) (e_{i}^f(k) + \tilde{\mu}_i(k)) - \tilde{\mu}_i(k+1), \nonumber \\
		\bm{e}_i^s(k+1) &= \bm{A} \bm{x}_i(k) + \bm{b} \hat{f}_i(k) + \bm{L}_i (\bm{C}_i \bm{x}_i(k) - \bm{y}_i(k)) \nonumber \\
		&\quad - \bm{A} \bm{x}_0(k) - \bm{b} f(\bm{x}_0(k))  \\
		&= ( \bm{A} + \bm{L}_i \bm{C}_i ) \bm{e}_i^s(k) + \bm{b} e_{i}^f(k+1) - \bm{L}_i \bm{v}_i(k), \nonumber
	\end{align}
	with $\tilde{\mu}_i(k) = f(\bm{x}_0(k)) - \mu_i(\bm{x}_i(k))$.
	Moreover, we define the concatenated prediction and observation error as $\bm{e}^f(k) = [e_{1}^f(k), \cdots, e_{N}^f(k)]^T$ and $\bm{e}^s(k) = [\bm{e}_1^s(k)^T, \cdots, \bm{e}_N^s(k)^T]^T$ respectively, whose dynamics are written as
	\begin{align} \label{eqn_concatenated_error_dynamics}
		\bm{e}^f(k+1) &=\big( \gamma_1 \bm{\mathcal{L}}(k) + \gamma_2 \bm{\mathcal{B}}(k) \big) \bm{e}^f(k)    +\gamma_2 \bm{\mathcal{B}}(k) \tilde{\bm{\mu}}(k) \nonumber\\
		&\quad - \tilde{\bm{\mu}}(k+1),  \\
		\label{eqn_concatenated_error_dynamics2} \bm{e}^s(k+1) &=  \bm{\Theta} \bm{e}^s(k) + (\bm{I}_N \otimes \bm{b}) \bm{e}^f(k) - \bm{\Upsilon} \bm{v}(k), 
	\end{align}
	with the concatenated prediction error denotes by $\tilde{\bm{\mu}}(k) = [\tilde{\mu}_1(k), \cdots, \tilde{\mu}_N(k)]^T$, and the concatenated measurement noise written as $\bm{v}(k) \!=\! [\bm{v}_1^T\!(k), \!\cdots\!, \bm{v}_N^T\!(k)]^T$, where 
	\begin{align*}
		&\bm{\Sigma}(k) = \mathrm{diag}(\sigma_1^2(\bm{x}_1(k)), \cdots, \sigma_N^2(\bm{x}_N(k)) ), \\
		&\bar{\bm{\Sigma}}(k) = \mathrm{diag}(\sigma_1^2(\bm{x}_1(k))+\bar{w}_1^2, \cdots, \sigma_N^2(\bm{x}_N(k))+\bar{w}_N^2 ),\\
		&\bm{\mathcal{L}}(k) = \bm{\mathcal{D}}(k) - \bm{\Sigma}(k) \bm{\mathcal{A}} \bar{\bm{\Sigma}}^{-1}(k), \bm{\Upsilon} = \mathrm{blkdiag}(\bm{L}_1, \cdots\!, \bm{L}_N),\\
		& \bm{\mathcal{D}}(k) = \mathrm{diag}(\bm{\Sigma}(k) \bm{\mathcal{A}} \bar{\bm{\Sigma}}^{-1}(k) \bm{1}_N),  \\
		&\bm{\mathcal{B}}(k) = \mathrm{diag}(\sigma_{f,1}^{-2}, \cdots, \sigma_{f,N}^{-2}) \bm{\Sigma}(k), \\
		& \bm{\Theta} = \mathrm{blkdiag}(\bm{A} + \bm{L}_1 \bm{C}_1, \cdots, \bm{A} + \bm{L}_N \bm{C}_N). 
	\end{align*}
	
	Given the error dynamics \eqref{eqn_concatenated_error_dynamics} and \eqref{eqn_concatenated_error_dynamics2} of prediction and observation of the distributed system, the joint learning performance for the distributed system is shown in the following theorem.
	\begin{theorem}
		\label{theorem_tracking_error_bound}
		Consider a system \eqref{eq_dynamics} measured by a distributed system \eqref{eqn_measurement} and bounded measurement noise $\bar{v}_i$ for $i \in \mathcal{V}$. 
		Each subsystem is observable and under the condition \eqref{eqn_data_collection_condition} with the training samples using Algorithm~\ref{algorithm_data_collection}, such that the collected data set $\mathbb{D}_i$ satisfies Assumption~\ref{assumption_dataset}.
		Use the observer-based cooperative learning strategy proposed in \eqref{eqn_oberver_based_prediction} satisfying Assumption\ref{assumption_f} and choose proper $\gamma_1 \in \mathbb{R}_{<0}$, $\gamma_2, \alpha_1 \in \mathbb{R}_{>0}$ and symmetric positive matrix $\bm{Q} \in \mathbb{R}^{2 \times 2}$ such that
		\begin{align}
			\lambda_f^* &= \max_{i = 1\cdots N, k \in \mathbb{N}} | \lambda_i(\gamma_1 \bm{\mathcal{L}}(k) \!+\! \gamma_2 \bm{\mathcal{B}}(k)) | < 1, \\
			\zeta_0 &= \underline{\lambda}(\bm{Q}) \!-\!  2 \alpha_1 \max(\lambda_f^*, \lambda_s^*) \max\big(\gamma_2^2 + 1, \| \bm{\Upsilon} \| ^ 2\big) \| \bm{P} \| \nonumber \\
			&\quad- \alpha_1^2 \max\big(\gamma_2^2 + 1, \| \bm{\Upsilon} \| ^ 2\big)^2 \| \bm{P} \| > 0,
		\end{align}
		with
		\begin{align}
			\lambda_s^* =& \max_{j = 1\cdots N, i = 1\cdots n} |\lambda_i(\bm{A} + \bm{L}_j \bm{C}_j)|, \\
			\bm{Q} =& -\begin{bmatrix}
				\lambda_f^* & 0 \\
				\| \bm{b} \| & \lambda_s^*
			\end{bmatrix}^T \bm{P} \begin{bmatrix}
				\lambda_f^* & 0 \\
				\| \bm{b} \| & \lambda_s^*
			\end{bmatrix} + \bm{P}, 
		\end{align}
		where the matrix $\bm{P}$ is the solution of discrete-time Lyapunov equation $\bm{\Phi}_1^T \bm{P} \bm{\Phi}_1 - \bm{P} = - \bm{Q}$.
		Then there exists $\bar{k} \in \mathbb{N}$ such that the observation and prediction error are ultimately bounded by
		\begin{align}
			\label{eqn_ultimate_bound_individual}
			|e_{i}^f(k)| \le \| \bm{e}^f(k) \| \le \bar{e}, && \| \bm{e}_i^s(k) \| \le \| \bm{e}^s(k) \| \le \bar{e},
		\end{align}
		respectively, for $\forall k > \bar{k}$, with the ultimate error bound $\bar{e}$ formulated as
		\begin{align}
			\bar{e} = \sqrt{{\bar{\lambda}(\bm{P})}/{\underline{\lambda}(\bm{P})}} \zeta_0^{-1} \big( \zeta_1 + \sqrt{ \zeta_1^2 + \zeta_0 \zeta_2} \big) \alpha_2,
		\end{align}
		where the coefficients $\zeta_1$ and $\zeta_2$ are defined by $\zeta_1 = \tau ( \lambda^*  + \alpha_1 \tau ) \| \bm{P} \|$ and $\zeta_2 = \tau^2 \| \bm{P} \|$. 
		The scalar $\alpha_2$ is defined as
		\begin{align}
			\alpha_2 = \alpha_1^{-1} L_f^2 \sqrt{N} + \| \bar{\bm{\eta}} \| + \| \bar{\bm{\eta}}^* \| + \| \bar{\bm{v}} \|,
		\end{align}
		where $\| \bar{\bm{v}} \| = \| [\bar{v}_1, \cdots, \bar{v}_N]^T \|$, the prediction accuracy related terms $\bar{\bm{\eta}}$ and $\bar{\bm{\eta}}^*$ are defined as $\bar{\bm{\eta}} = [\bar{\eta}_1, \cdots, \bar{\eta}_N]^T$ and $\bar{\bm{\eta}}^* = [\bar{\eta}_1^*, \cdots, \bar{\eta}_N^*]^T$ with $\bar{\eta}_i = \sup_{k \in \mathbb{N}_0} \eta_i(\bm{x}_i(k))$ and $\bar{\eta}_i^* = (\beta \sigma_{f,i}^2)^{-1} \bar{\eta}_i^3$, respectively. 
		Moreover, the constants $\lambda^* = \max(\lambda_f^*, \lambda_s^*)$ and $\tau = \max(\gamma_2^2 + 1, \| \bm{\Upsilon} \| ^ 2)$ denote the maximal singular values of $\bm{\Phi}_1$ and $\bm{\Phi}_2$ respectively, in which $\lambda_s^* = \max_{j = 1,\cdots,N} \max_{i = 1,\cdots,n} |\lambda_i(\bm{A} + \bm{L}_j \bm{C}_j)| \in [0,1)$ due to the block triangular structure of $\bm{\Lambda}$ and the designed $\bm{L}_i$.
	\end{theorem}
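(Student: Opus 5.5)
We work with the norm-level comparison system built from the concatenated error dynamics \eqref{eqn_concatenated_error_dynamics} and \eqref{eqn_concatenated_error_dynamics2}. We stack $\bm{z}(k) = [\|\bm{e}^f(k)\|, \|\bm{e}^s(k)\|]^T$ and aim for a componentwise inequality
\begin{align*}
\bm{z}(k+1) \le \bm{\Phi}_1 \bm{z}(k) + \bm{\Phi}_2 \bm{d}(k), \qquad \bm{\Phi}_1 = \begin{bmatrix} \lambda_f^* & 0 \\ \|\bm{b}\| & \lambda_s^* \end{bmatrix},
\end{align*}
where $\bm{d}(k)$ collects the disturbance terms $\tilde{\bm{\mu}}(k)$, $\tilde{\bm{\mu}}(k+1)$ and $\bm{v}(k)$. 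The gain $\bm{\Phi}_2$ has entries built from $\gamma_2$, $1$ and $\|\bm{\Upsilon}\|$, so that its squared maximal singular value is bounded by $\tau$.

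\textbf{Step 1: comparison system.} For the $\bm{e}^f$ row we use $\lambda_f^*<1$ to bound the transition $\gamma_1\bm{\mathcal{L}}(k)+\gamma_2\bm{\mathcal{B}}(k)$. Strictly this requires a common contraction in norm rather than only a spectral-radius bound; I would use that the matrix is similar to a symmetric one through $\bar{\bm{\Sigma}}^{1/2}$-type scaling, or else accept the eigenvalue bound as the contraction rate as the statement implicitly does. For the $\bm{e}^s$ row, $\bm{\Theta}$ is block diagonal with Schur blocks, which gives rate $\lambda_s^*$, and the coupling term contributes $\|\bm{b}\|\,\|\bm{e}^f\|$. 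We bound $\|\gamma_2\bm{\mathcal{B}}\| \le \gamma_2$ using $\varpi_i\in(0,1]$.

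\textbf{Step 2: bounding the disturbances.} We write $\tilde{\mu}_i(k) = \big(f(\bm{x}_0)-f(\bm{x}_i)\big) + \big(f(\bm{x}_i)-\mu_i(\bm{x}_i)\big)$. Lemma~\ref{lemma_GP_error_bound} bounds the second part by $\bar{\eta}_i$. For the first part we use the Lipschitz constant of $f$: by Assumption~\ref{assumption_f} and the RKHS reproducing property, $|f(\bm{x}_0)-f(\bm{x}_i)| \le \|f\|_\kappa\sqrt{2(\sigma_f^2-\kappa(\bm{x}_0,\bm{x}_i))} \le \Gamma\sqrt{2L_\kappa\|\bm{e}_i^s\|}$. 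This square-root dependence is exactly why $L_f=\sqrt{2L_\kappa}\Gamma$ appears. We then apply Young's inequality with parameter $\alpha_1$,
\begin{align*}
L_f\sqrt{\|\bm{e}^s_i\|} \le \tfrac{\alpha_1}{2}\|\bm{e}^s_i\| + \tfrac{L_f^2}{2\alpha_1},
\end{align*}
so that state-dependent terms of order $\alpha_1\tau$ enter the quadratic part, which is where $\zeta_0$ and $\zeta_1$ come from, while constants go into $\alpha_2$. The product term $\gamma_2\bm{\mathcal{B}}\tilde{\bm{\mu}}$ yields $\bar{\eta}^*_i$: since $\sigma_i^2/\sigma_{f,i}^2 \cdot \eta_i = \eta_i^3/(\beta\sigma_{f,i}^2)$, this is the origin of $\bar{\bm{\eta}}^*$. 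Collecting everything gives $\|\bm{d}(k)\| \le \alpha_1\|\bm{z}(k)\| + \alpha_2$.

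\textbf{Step 3: Lyapunov argument.} We take $V=\bm{z}^T\bm{P}\bm{z}$, where $\bm{P}$ solves the discrete Lyapunov equation for the Schur, lower-triangular $\bm{\Phi}_1$; its eigenvalues are $\lambda_f^*,\lambda_s^*<1$, so $\bm{P}$ exists. Because $\bm{P}$ may have a negative off-diagonal entry, passing from the componentwise inequality to a bound on $V$ needs care; I would use $\|\cdot\|_{\bm{P}}$-norm bounds directly. We then expand
\begin{align*}
V(k+1)-V(k) \le -\bm{z}^T\bm{Q}\bm{z} + 2\|\bm{z}\|\lambda^*\|\bm{P}\|\sqrt{\tau}\,\|\bm{d}\| + \tau\|\bm{P}\|\,\|\bm{d}\|^2.
\end{align*}
Substituting the bound from Step 2 gives $\Delta V \le -\zeta_0\|\bm{z}\|^2 + 2\zeta_1\alpha_2\|\bm{z}\| + \zeta_2\alpha_2^2$. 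This is negative once $\|\bm{z}\|$ exceeds the positive root $\zeta_0^{-1}(\zeta_1+\sqrt{\zeta_1^2+\zeta_0\zeta_2})\alpha_2$. The standard ultimate-boundedness argument through sublevel sets of $V$ then adds the factor $\sqrt{\bar{\lambda}(\bm{P})/\underline{\lambda}(\bm{P})}$ and yields a finite $\bar{k}$. Since each component is bounded by $\|\bm{z}\|$, \eqref{eqn_ultimate_bound_individual} follows.

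\textbf{Main obstacle.} The hardest step is Step 1 for the time-varying, non-symmetric matrix $\gamma_1\bm{\mathcal{L}}(k)+\gamma_2\bm{\mathcal{B}}(k)$, because an eigenvalue bound does not by itself give a contraction of the norm. I would first try a diagonal similarity transform that symmetrizes $\bm{\Sigma}\bm{\mathcal{A}}\bar{\bm{\Sigma}}^{-1}$. Failing that, I would interpret $\lambda_f^*$ as a bound on the induced norm. The second difficulty is keeping the constants in Steps 2 and 3 aligned with the specific forms of $\zeta_0$, $\zeta_1$, $\zeta_2$ and $\alpha_2$ given in the statement.
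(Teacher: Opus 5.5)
Your proposal follows what the paper's proof must be, step for step, judging from the constants in the statement (the proof itself is only in the supplementary material). The ingredients are: the two-dimensional norm-level comparison system $\bm{z}(k+1)\le\bm{\Phi}_1\bm{z}(k)+\bm{\Phi}_2\bm{d}(k)$; the RKHS estimate $|f(\bm{x}_0)-f(\bm{x}_i)|\le L_f\sqrt{\|\bm{e}_i^s\|}$ split by Young's inequality in $\alpha_1$, which produces the $\alpha_1^{-1}L_f^2\sqrt{N}$ term; the identity $\varpi_i\eta_i=\eta_i^3/(\beta\sigma_{f,i}^2)$ behind $\bar{\bm{\eta}}^*$; and the quadratic Lyapunov function $\bm{z}^T\bm{P}\bm{z}$, whose decrease outside the positive root $\zeta_0^{-1}(\zeta_1+\sqrt{\zeta_1^2+\zeta_0\zeta_2})\alpha_2$ gives $\bar{e}$ after the factor $\sqrt{\bar{\lambda}(\bm{P})/\underline{\lambda}(\bm{P})}$. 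At the obstacle you flag, the statement's constants show the paper takes your fallback: it uses $\lambda_f^*$ directly as the per-step contraction rate of $\gamma_1\bm{\mathcal{L}}(k)+\gamma_2\bm{\mathcal{B}}(k)$ and treats $\lambda^*$ and $\tau$ as norm bounds for $\bm{\Phi}_1$ and $\bm{\Phi}_2$, so your spectral-radius-versus-induced-norm caveats apply equally to the paper's argument, while your use of $\sqrt{\tau}$ in place of $\tau$ only sharpens the constants since $\tau\ge 1$.
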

	\begin{IEEEproof}
		See the supplementary material.
	\end{IEEEproof}

	\begin{remark} \label{remark_gershgorin}
		Notably, the stability guarantees in Theorem~\ref{theorem_tracking_error_bound} rely on the global spectral properties of the graph, particularly the eigenvalues of $\gamma_1 \bm{\mathcal{L}}(k) + \gamma_2 \bm{\mathcal{B}}(k)$. While exact verification requires global topology information, practitioners can derive a conservative sufficient condition for ad-hoc networks using the Gershgorin Circle Theorem and Weyl's inequality. Since $\lambda_{max}(\bm{\mathcal{L}}) \leq 2N$ and $\lambda_{max}(\bm{\mathcal{B}}) \leq 1$, choosing parameters such as $|\gamma_1| < \frac{1}{4N}$ and $\gamma_2 < \frac{1}{2}$ easily satisfies the bound $\lambda_{max}(\gamma_1 \bm{\mathcal{L}} + \gamma_2 \bm{\mathcal{B}}) < 1$ without requiring full graph knowledge, at the price of a conservatism that grows with $N$: the gain is scaled by the worst case $4N$, whereas the spectrum of a sparse graph grows only with the node degree, so the resulting design converges more slowly and settles at a larger ultimate error than a globally tuned one. Nevertheless, extending the current framework to a fully distributed, topology-agnostic adaptive tuning mechanism, where agents do not even require knowledge of the network upper bounds (e.g., $N$), remains a highly relevant direction for future work.
	\end{remark}
	
	Theorem~\ref{theorem_tracking_error_bound} shows that the observation and prediction using the proposed observer-based cooperative learning in \eqref{eqn_oberver_based_prediction} will converge to a small area around the zero, achieving practical convergence defined in Definition~\ref{def_convergence}.
	The additional condition requiring $\zeta_0 > 0$ is practically not restrictive, which can be easily achieved by choosing sufficiently small $\alpha_1$.
	However, smaller $\alpha_1$ induces larger ultimate bound $\bar{e}$ by considering the inverse relationship between $\alpha_1$ and $\alpha_2$.
	Unlike the setting in continuous-time without measurement noise such as in \cite{umlauftFeedbackLinearizationBased2020}, increasing the control gains by choosing eigenvalues of $\gamma_1 \bm{\mathcal{L}}(k) + \gamma_2 \bm{\mathcal{B}}(k)$ or $\bm{A} + \bm{L}_i \bm{C}_i$ for $i \in \mathcal{V}$ closer to $0$ may not tighter the error bound $\bar{e}$.
	This is because the induced high gains increase the sensitivity of the multi-sensor system to the model uncertainty and measurement noise reflected by a larger singular value of $\bm{\Phi}_2$.
	
	\section{Simulations}\label{sec_simulation}
	In \cref{subsec_simSetting}, we describe the simulation setting. The proposed approach is then compared with prevailing distributed GP frameworks and an adaptive neural-network baseline through Monte Carlo tests in \cref{subsec_comparison,subsec_MCT}, and its robustness under unreliable communication is examined in \cref{subsec_dropout}.
	
	\subsection{Simulation Setting}
	\label{subsec_simSetting}
	We consider a system represented by \eqref{eq_dynamics} with state dimension $n = 2$.\footnote{The full simulation configuration is given in the supplementary material.} In this case, the systems state is denoted as $\bm{x}_0 = [x_{0,1}, x_{0,2}]^T$, and the system matrices are defined as $\bm{A} = \begin{bmatrix}
		1 & 1 \\ 0 & 0
	\end{bmatrix}$ and $\boldsymbol{b}=[0~1]^T$.
	The state trajectory, which the distributed system aims to estimate, is given by $x_{0,1}(k) = \arctan(a_1 k) \sin(a_2 k)$, where $a_1$ and $a_2 \in \mathbb{R}$ chosen as $0.01$ and $0.05$, respectively.
	The overall COIN-GP framework is illustrated in \cref{fig_common}. 
	The simulation is conducted over a maximum time step of $k=500$.

	\begin{figure}
		\centering
		\includegraphics[width=0.72\linewidth]{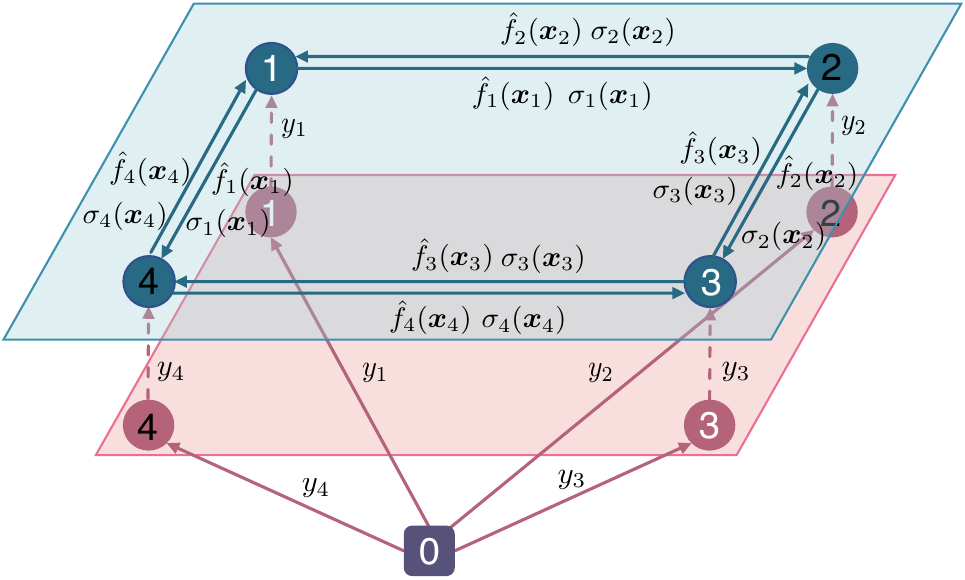}
		\caption{Communication topology among sensors/agents. 
			Teal nodes represent the GP model instances, and magenta nodes represent the actual sensing nodes that transmit measurements from system node 0.
			The black numbers on the nodes indicate they lack individual training datasets, i.e., the node $2$ and $4$, while the white numbers imply the existence of datasets on these nodes, i.e., node $1$ and $3$.}
		\label{fig_common}
	\end{figure}
	
	In order to estimate the system states $\bm{x}_0$ and unknown dynamics $f(\cdot)$, the distributed system consisting of $N = 4$ sensors with an undirected topology shown in \cref{fig_common} is employed.
	Each subsystem follows \eqref{eqn_measurement} with noise variance $\bar{v}_i = 0.001, \forall i \in \mathcal{V}$. The specific configurations for each sensor are detailed in Table \ref{table_sensor_configuration}.
	Note that $\bm{L}_i$ is chosen according to $\lambda_{1,2}(\bm{A} \!+\! \bm{L}_i \bm{C}_i)$ using robust pole placement \cite{kautsky1985robust}.
	For agents $2$ and $4$, data collection is deliberately deactivated, so that their data sets remain empty and the matrices $\bm{H}_{i,d}$, $\bm{t}_i$ and $\bm{T}_i$ are left unspecified in \cref{table_sensor_configuration}.
	This setting serves to demonstrate the efficacy of the proposed cooperative learning approach, which does not require all agents to collect data or participate in modeling the unknown function.
	For agent $1$ and $3$, the training data set $\mathbb{D}_1$ and $\mathbb{D}_3$ are collected online using Algorithm \ref{algorithm_data_collection} satisfying Assumption \ref{assumption_dataset}.
	Additionally, each Gaussian process model established is initialized at $k = 0$ with an empty data set, i.e., $\mathbb{D}_i = \emptyset$, and prior knowledge regarding the kernel is chosen as automatic relevance determination squared exponential kernel with $\kappa_i(\bm{\xi}, \bm{\xi}') = \sigma_{f,i}^2 \exp \big(-\frac{1}{2} \sum_{d = 1}^n {| \xi_d - \xi_d' |^2}/{l_d^2} \big)$, where $\bm{\xi} = [\xi_1, \xi_2]^T$. 
	The hyperparameters are set to $\sigma_{f,i} = 0.017$, $l_1 = 1.8$ and $l_2 = 0.025$, which are obtained through hyperparameter optimization using the non-empty dataset $\mathbb{D}_1$ and $\mathbb{D}_3$ with $k = 500$. 
	
	\begin{table}[t]
		\caption{Sensor configurations} \label{table_sensor_configuration}
		\centering
		\scriptsize
		\setlength{\tabcolsep}{3pt}
		\begin{tabular}{l c c c c}
			\hline
			Properties & Sensor 1 & Sensor 2 & Sensor 3 & Sensor 4 \\
			\hline
			$\bm{C}_i$ in \eqref{eqn_measurement}					& $[1,\!0]$		& $[1,\!0;2,\!1]$	& $[1,\!1;1,\!2]$					& $[0,\!1;3,\!2]$	\\
			$\bm{H}_{i,0}$ in \eqref{eqn_CAn_CAd}					& $[0]$			& $-$				& $0.5[1,\!0;1,\!0]$				& $-$				\\
			$\bm{H}_{i,1}$ in \eqref{eqn_CAn_CAd}					& $[1]$			& $-$				& $0.5[0,\!1;0,\!1]$				& $-$				\\
			$\bm{t}$ in \eqref{eqn_phi_main}								& $1$			& $-$				& $[2;1]$							& $-$				\\
			$\bm{T}$ in \eqref{eqn_xi_main}								& $\bm{I}_2$	& $-$				& $[\bm{I}_2,\bm{0}_{2\times2}]$	& $-$				\\
			$\lambda_{1,2}(\bm{A} \!+\! \bm{L}_i \bm{C}_i)$	& $[0.4,\!0.5]$	& $[0.4,\!-\!0.5]$	& $[0.5,\!-\!0.3]$					& $[0.4,\!-\!0.3]$	\\
			\hline
		\end{tabular}
	\end{table}
	
	\subsection{Performance comparison}\label{subsec_comparison}
	To demonstrate the superiority of the proposed observer-based online cooperative learning \eqref{eqn_oberver_based_prediction} with the devised state estimation method \eqref{eq_stateEsitmation}, we conduct a comparative simulation for analyzing the observation and prediction performance against the prevailing distributed GP frameworks. It's crucial to note that due to the existing methods rely on complete observations necessitating full measurement of system states $\bm{x}_0$ and are unsuitable for output-based control scenarios. In order to ensure a fair comparison, we employ the same observer \eqref{eq_stateEsitmation}, across all compared approaches.
	The details of our experimental setup are elucidated below.
	The compared methods are the proposed cooperative learning \eqref{eqn_oberver_based_prediction} with $\gamma_2 = 1$ and the state-dependent gain $\gamma_{1,i}(k) = -\gamma_2 \varpi_i(k) / \sum_{j \in \mathcal{N}_i} \tilde{a}_{i,j}(k)$, under which the consensus term in \eqref{eqn_oberver_based_prediction} reduces to the weighted average of the neighbors' estimates with the weights \eqref{eqn_consensus_weight}; local learning \cite{nguyen2008local} with $\hat{f}_i(\bm{x}_i(k)) = \mu_i(\bm{x}_i(k))$; MoE \cite{tresp2000mixtures} with $\hat{f}_i(\bm{x}_i(k)) = \sum_{j \in \{i, \mathcal{N}_i \}} a_{i,j} \mu_j(\bm{x}_j(k))$; and PoE \cite{yang_CDC2021_Distributed} with
	\begin{align*}
		\hat{f}_i(\bm{x}_i(k)) = \frac{ \sum_{j \in \{i, \mathcal{N}_i \}} a_{i,j} \sigma_j^{-2}(\bm{x}_j(k)) \mu_j(\bm{x}_j(k))}{\sum_{j \in \{i, \mathcal{N}_i \}} a_{i,j} \sigma_j^{-2}(\bm{x}_j(k))}.
	\end{align*}
	Moreover, with $a_{i,i} = 1$, the shorthands $\mu_j = \mu_j(\bm{x}_j(k))$ and $\sigma_j^2 = \sigma_j^2(\bm{x}_j(k))$, and the log-variance weight $\rho_j(k) = \frac{1}{2} \ln \big( \sigma_{f,j}^2 / \sigma_j^2 \big)$, we evaluate GPoE \cite{Bach_ICML2015_Distributed} with
	\begin{align*}
		\hat{f}_i(\bm{x}_i(k)) = \frac{ \sum_{j \in \{i, \mathcal{N}_i \}} a_{i,j} \rho_j(k) \sigma_j^{-2} \mu_j}{\sum_{j \in \{i, \mathcal{N}_i \}} a_{i,j} \rho_j(k) \sigma_j^{-2}},
	\end{align*}
	BCM \cite{trespBayesianCommitteeMachine2000} with
	\begin{align*}
		\hat{f}_i(\bm{x}_i(k)) \!=\! \frac{ \sum_{j \in \{i, \mathcal{N}_i \}} a_{i,j} \sigma_j^{-2} \mu_j}{\sum\limits_{j \in \{i, \mathcal{N}_i \}} \!a_{i,j} \sigma_j^{-2} \!+\! \big( 1 \!-\!\! \sum\limits_{j \in \{i, \mathcal{N}_i \}} \!a_{i,j} \big) \sigma_{f,i}^{-2}},
	\end{align*}
	and RBCM \cite{Bach_ICML2015_Distributed} with
	\begin{align*}
		\hat{f}_i(\bm{x}_i(k)) \!=\! \frac{ \sum_{j \in \{i, \mathcal{N}_i \}} a_{i,j} \rho_j(k) \sigma_j^{-2} \mu_j}{\sum\limits_{j \in \{i, \mathcal{N}_i \}} \!\!a_{i,j} \rho_j(k) \sigma_j^{-2} \!+\! \big( 1 \!-\!\!\! \sum\limits_{j \in \{i, \mathcal{N}_i \}} \!\!a_{i,j} \rho_j(k) \big) \sigma_{f,i}^{-2}}.
	\end{align*}
	Finally, to assess the trade-off between capacity and formal guarantees, an adaptive radial-basis-function neural network baseline is evaluated, replacing each agent's GP by an RBFNN with linear output weights,
	\begin{align*}
		&\hat{f}_i^{\mathrm{NN}}(\bm{x}) = \hat{\bm{w}}_i^T \bm{\phi}_i(\bm{x}), \\
		&[\bm{\phi}_i(\bm{x})]_m = \exp \Big( \! -\frac{1}{2} \sum_{d=1}^{n} \frac{(x_d - c_{m,d}^i)^2}{s_d^2} \Big), \; m = 1, \dots, M_{\phi},
	\end{align*}
	with $M_{\phi} = 100$ Gaussian features whose centers $\bm{c}_m^i$ are assigned randomly over the operating region, since no prior data is available to place them, mirroring the a-priori fixed GP kernel hyperparameters, and whose widths $s_d$ equal the GP lengthscales.
	Upon each reconstructed pair $(\bm{\xi}_i, \varphi_i)$, the weights are updated online by normalized least-mean-squares with $\sigma$-modification,
	\begin{align*}
		\hat{\bm{w}}_i \leftarrow \hat{\bm{w}}_i + \eta\, \frac{\bm{\phi}_i(\bm{\xi}_i) \big( \varphi_i - \hat{\bm{w}}_i^T \bm{\phi}_i(\bm{\xi}_i) \big)}{\varepsilon + \| \bm{\phi}_i(\bm{\xi}_i) \|^2} - \eta \sigma_m \hat{\bm{w}}_i,
	\end{align*}
	with $\eta = 0.5$, $\sigma_m = 10^{-4}$, $\varepsilon = 10^{-8}$, and a replay buffer sharing the GP data budget, under the identical observer, data-collection pipeline, topology, and noise realizations.
	RBFNN-Local is non-cooperative, while RBFNN-Coop applies the consensus law \eqref{eqn_oberver_based_prediction} with $\mu_i(\cdot)$ replaced by the NN prediction; since the NN provides no posterior variance, the weights \eqref{eqn_consensus_weight} degenerate to uniform weights ($\tilde{a}_{i,j} = a_{i,j}$, $\varpi_i = 1$, gains re-tuned to $\gamma_1 = -0.05$, $\gamma_2 = 0.2$).
	All NN hyperparameters and consensus gains are grid-searched in favor of the baseline.
	All sensors are initialized at $k = 0$ with same state estimation for each methods, specifically $\bm{x}_1(0) = [0.6294, 0.0406]^T$, $\bm{x}_2(0) = [-0.7460, 0.0413]^T$, $\bm{x}_3(0) = [0.2647, -0.0402]^T$ and $\bm{x}_4(0) = [-0.4430, 0.0047]^T$.

	On the individual-agent level, the benefit of cooperation concentrates on the agents without data collectability: over the $100$ Monte Carlo runs of \cref{subsec_MCT}, the per-agent mean prediction error of agents $2$ and $4$ drops from $0.067$ (Local) to $0.031$ (COIN-GP), while for the data-collectable agents COIN-GP performs on par with or better than local learning; MoE instead weighs empty and trained models equally, deteriorating the accuracy of the reliable agents.
	\cref{figure_Prediction_Error_Bound} shows the prediction error from GP, which is bounded by its theoretical upper bound derived in \cref{lemma_GP_error_bound}.
	Moreover, the RKHS norm bound $\Gamma = 5$ in computing $\eta(\cdot)$ is estimated by using a data-driven method provided in \cite{hashimoto2022learning}.
	
	\begin{figure}[t]
		\centering
		\begin{tikzpicture}
			\def\file{fig/Prediction_ErrorBound.txt}
			\begin{semilogyaxis}[xlabel={$k$},ylabel={GP Error},
				xmin=1, ymin = 2e-5, xmax = 499,ymax=9e-1,legend columns=4,
				width=\columnwidth,height=3.5cm,legend pos=north east,
				]
				
				\addplot[carrot_orange, thick]    table[x = t_set , y  = PredictionError_set ]{\file};
				\addplot[carrot_orange, thick, dashed]    table[x = t_set , y  = PredictionErrorBound_set ]{\file};
				
				\legend{
					$\| \bm{\mu}(\bm{x}) \!-\! \bm{1}_N \!\otimes\! f(\bm{x}_0) \|$, Error bound
				}
			\end{semilogyaxis}
		\end{tikzpicture}
		\vspace{-0.3cm}
		\caption{
			Prediction errors of local GP models and their theoretical bounds.
		}
		\label{figure_Prediction_Error_Bound}
	\end{figure}

	\subsection{Monte Carlo Test}\label{subsec_MCT}
	To show the generalizability of our method, the Monte Carlo tests are conducted $100$ times for each method.
	The initial observer states $\bm{x}_i(0)$ for $i \in \mathcal{V}$ are sampled randomly from a uniform distribution in $[-1,1] \times [-0.05,0.05]$, and the system dynamics is also randomized with $a_1$ following a uniform distribution between $0.01$ to $0.05$ and $a_2$ satisfying a uniform distribution between $0.05$ to $0.1$.
	To reflect realistic long-term networked deployments, the main comparison is conducted with the streaming implementation, i.e., sliding-window datasets with budget $M = 20$ ensuring a bounded per-step complexity, under mild communication unreliability (per-step link failure probability $p = 0.2$).
	In addition to the GP-based baselines, the adaptive RBFNN baselines introduced in the experimental setup are included in the same figures and tables.
	
	As illustrated in \cref{figure_Observation_Error}, where solid lines denote the mean and shaded areas the $\pm 1$ standard deviation (shown for COIN-GP, PoE, MoE, and Local; the bands of the newly added baselines are omitted for legibility), COIN-GP achieves the lowest observation and prediction errors among all nine methods, including the strongest uncertainty-weighted aggregators (GPoE, BCM) and the RBFNN baselines, while RBCM diverges under link failures in this streaming setting and leaves the displayed range.
	The same ranking holds for every link failure probability (\cref{subsec_dropout}) and for every data budget $M \in \{20, 100, 500\}$. As reported in the last column of \cref{table_observation_error}, one full COIN-GP time step takes $0.33\,$ms on a single CPU core, on par with purely local learning ($0.34\,$ms) and only marginally above the static and RBFNN baselines ($0.28$--$0.31\,$ms).

	\begin{figure}[t]
		\centering
		\begin{tikzpicture}
			\begin{groupplot}[group style={group size=1 by 2, vertical sep=0.25cm,
					x descriptions at=edge bottom},
				width=\columnwidth, height=3.7cm, ymode=log,
				xmin=1, xmax=499,
				tick label style={font=\scriptsize},
				legend style={font=\scriptsize, /tikz/every even column/.append style={column sep=3pt}},
				legend columns=5,
				legend image code/.code={\draw[mark repeat=2, mark phase=2, #1] plot coordinates {(0cm,0cm) (0.22cm,0cm)};}]
				\nextgroupplot[ylabel={$\| \bm{x} \!-\! \bm{1}_N \!\otimes\! \bm{x}_0 \|$},
				ymin=2e-2, ymax=1.2, legend to name=mclegend]
				\def\file{fig/Observation_Error_Time.txt}
				\bandcurve{carrot_orange, very thick}{carrot_orange!50}{\file}{t_set_Obs}{ErrorObs_DoCoL_NoDis}
				\bandcurve{carrot_blue, thick}{carrot_blue!50}{\file}{t_set_Obs}{ErrorObs_DoPOE_NoDis}
				\bandcurve{carrot_yellow, thick}{carrot_yellow!50}{\file}{t_set_Obs}{ErrorObs_DoMOE_NoDis}
				\bandcurve{carrot_green, thick}{carrot_green!50}{\file}{t_set_Obs}{ErrorObs_NoCoL_NoDis}
				\addplot[c_purple, each nth point=2] table[x=t_set_Obs, y=mean_ErrorObs_DoGPOE_NoDis]{\file};
				\addplot[cyan, densely dashed, each nth point=2] table[x=t_set_Obs, y=mean_ErrorObs_DoBCM_NoDis]{\file};
				\addplot[c_red, densely dotted, thick, each nth point=2] table[x=t_set_Obs, y=mean_ErrorObs_DoRBCM_NoDis]{\file};
				\addplot[black!70, dashed, each nth point=2] table[x=t_set_Obs, y=mean_ErrorObs_NoNN_NoDis]{\file};
				\addplot[black!40, dotted, thick, each nth point=2] table[x=t_set_Obs, y=mean_ErrorObs_DoNN_NoDis]{\file};
				\legend{COIN-GP, PoE, MoE, Local, GPoE, BCM, RBCM, RBFNN-Local, RBFNN-Coop}
				\nextgroupplot[xlabel={$k$}, ylabel={$\| \hat{\bm{f}}(\bm{x}) \!-\! \bm{1}_N f(\bm{x}_0) \|$},
				ymin=1e-2, ymax=0.6]
				\def\file{fig/Prediction_Error_Time.txt}
				\bandcurve{carrot_orange, very thick}{carrot_orange!50}{\file}{t_set_Pre}{ErrorPre_DoCoL_NoDis}
				\bandcurve{carrot_blue, thick}{carrot_blue!50}{\file}{t_set_Pre}{ErrorPre_DoPOE_NoDis}
				\bandcurve{carrot_yellow, thick}{carrot_yellow!50}{\file}{t_set_Pre}{ErrorPre_DoMOE_NoDis}
				\bandcurve{carrot_green, thick}{carrot_green!50}{\file}{t_set_Pre}{ErrorPre_NoCoL_NoDis}
				\addplot[c_purple, each nth point=2] table[x=t_set_Pre, y=mean_ErrorPre_DoGPOE_NoDis]{\file};
				\addplot[cyan, densely dashed, each nth point=2] table[x=t_set_Pre, y=mean_ErrorPre_DoBCM_NoDis]{\file};
				\addplot[c_red, densely dotted, thick, each nth point=2] table[x=t_set_Pre, y=mean_ErrorPre_DoRBCM_NoDis]{\file};
				\addplot[black!70, dashed, each nth point=2] table[x=t_set_Pre, y=mean_ErrorPre_NoNN_NoDis]{\file};
				\addplot[black!40, dotted, thick, each nth point=2] table[x=t_set_Pre, y=mean_ErrorPre_DoNN_NoDis]{\file};
			\end{groupplot}
		\end{tikzpicture}
		
		\ref{mclegend}
		\caption{State observation (top) and prediction (bottom) errors over $100$ Monte Carlo runs ($M = 20$, $p = 0.2$).}
		\label{figure_Observation_Error}
	\end{figure}
	For a clear comparison, we have numerically listed several evaluation metrics for both observation error and prediction error in \cref{table_observation_error}. To mitigate the impact of initial errors, all values are calculated using data starting from $k= 100$.
	COIN-GP attains the best mean, median, and RMSE for both the observation and the prediction error, while RBCM diverges under link failures in the streaming setting (denoted div.).
	Therefore, the effectiveness and superiority of the proposed method are clearly demonstrated.
	
	Regarding the neural-network baseline, three observations substantiate the capacity-versus-guarantee trade-off: the RBFNN matches the local GP almost exactly, confirming comparable capacity; uniform-weight cooperation provides the NN with at best marginal benefit, since without calibrated uncertainty the consensus incorporates the predictions of agents with empty datasets, whereas the variance-weighted fusion of COIN-GP reduces the error of local learning by $40\%$; and the GP prediction is accompanied by the deterministic bound of \cref{lemma_GP_error_bound} at every time step (cf.\ \cref{figure_Prediction_Error_Bound}), whereas no comparable bound exists for the RBFNN error.
	
	\begin{table}[t]
		\caption{Observation and prediction errors and wall-clock time per step using different methods ($k \geq 100$, $100$ runs, $M = 20$, $p = 0.2$, single CPU core)} \label{table_observation_error}
		\centering
		\scriptsize
		\setlength{\tabcolsep}{4pt}
		\begin{tabular}{l c c c c c c c}
			\hline
			& \multicolumn{3}{c}{Observation error} & \multicolumn{3}{c}{Prediction error} & Time \\
			Methods & Mean & Median & RMSE & Mean & Median & RMSE & [ms] \\
			\hline
			COIN-GP     & $\bm{0.144}$ & $\bm{0.129}$ & $\bm{0.170}$ & $\bm{0.075}$ & $\bm{0.066}$ & $\bm{0.089}$ & $0.33$ \\
			PoE         & $0.197$ & $0.193$ & $0.214$ & $0.099$ & $0.097$ & $0.110$ & $0.29$ \\
			MoE         & $0.248$ & $0.257$ & $0.278$ & $0.115$ & $0.120$ & $0.130$ & $0.29$ \\
			Local       & $0.237$ & $0.233$ & $0.257$ & $0.125$ & $0.127$ & $0.138$ & $0.34$ \\
			GPoE        & $0.187$ & $0.185$ & $0.199$ & $0.096$ & $0.094$ & $0.105$ & $0.30$ \\
			BCM         & $0.187$ & $0.185$ & $0.199$ & $0.096$ & $0.094$ & $0.105$ & $0.30$ \\
			RBCM        & div. & $0.928$ & div. & div. & $0.103$ & div. & $0.31$ \\
			RBFNN-Coop  & $0.226$ & $0.218$ & $0.248$ & $0.123$ & $0.119$ & $0.137$ & $0.28$ \\
			RBFNN-Local & $0.223$ & $0.210$ & $0.240$ & $0.124$ & $0.119$ & $0.134$ & $0.28$ \\
			\hline
		\end{tabular}
	\end{table}

	\subsection{Unreliable Communication}\label{subsec_dropout}
	To assess the robustness of the cooperative mechanism under degraded connectivity, every communication link fails independently at each time step with probability $p \in \{0, 0.2, 0.4, 0.6, 0.8\}$; a failed link removes the neighbor from the consensus term of COIN-GP and from the aggregation of the static baselines at that step, using the streaming setting of the main comparison ($M = 20$).
	
	\begin{figure}[t]
		\centering
		\begin{tikzpicture}
			\begin{groupplot}[group style={group size=2 by 1, horizontal sep=0.45cm},
				width=0.48\columnwidth, height=3.6cm,
				xlabel={$p$}, xmin=0, xmax=0.8, xtick={0,0.2,0.4,0.6,0.8},
				restrict y to domain*=0:0.5,
				scaled y ticks=false, yticklabel style={/pgf/number format/fixed},
				tick label style={font=\scriptsize}, label style={font=\small},
				legend style={font=\scriptsize}, legend columns=4,
				grid=both, grid style={gray!20}]
				\nextgroupplot[ylabel={observation error}, ymin=0.13, ymax=0.27,
				legend to name=droplegend]
				\addplot[carrot_orange, very thick, mark=*] coordinates
				{(0,0.1363) (0.2,0.1445) (0.4,0.1565) (0.6,0.1715) (0.8,0.2053)};
				\addplot[carrot_blue, thick, mark=square*] coordinates
				{(0,0.1927) (0.2,0.1990) (0.4,0.2058) (0.6,0.2139) (0.8,0.2220)};
				\addplot[carrot_yellow, thick, mark=diamond*] coordinates
				{(0,0.2557) (0.2,0.2496) (0.4,0.2441) (0.6,0.2389) (0.8,0.2342)};
				\addplot[carrot_green, thick, mark=x] coordinates
				{(0,0.2385) (0.2,0.2385) (0.4,0.2385) (0.6,0.2385) (0.8,0.2385)};
				\addplot[c_purple, thick, mark=triangle*] coordinates
				{(0,0.1806) (0.2,0.1879) (0.4,0.1971) (0.6,0.2079) (0.8,0.2200)};
				\addplot[cyan, thick, densely dashed, mark=triangle] coordinates
				{(0,0.1805) (0.2,0.1878) (0.4,0.1970) (0.6,0.2079) (0.8,0.2200)};
				\addplot[c_red, thick, mark=asterisk] coordinates
				{(0,0.2219) (0.2,21527) (0.4,32308) (0.6,33364) (0.8,23418)};
				\legend{COIN-GP, PoE, MoE, Local, GPoE, BCM, RBCM}
				\nextgroupplot[ylabel={prediction error}, ymin=0.06, ymax=0.14,
				ylabel near ticks, yticklabel pos=right]
				\addplot[carrot_orange, very thick, mark=*] coordinates
				{(0,0.0678) (0.2,0.0753) (0.4,0.0841) (0.6,0.0941) (0.8,0.1136)};
				\addplot[carrot_blue, thick, mark=square*] coordinates
				{(0,0.0927) (0.2,0.1001) (0.4,0.1068) (0.6,0.1130) (0.8,0.1185)};
				\addplot[carrot_yellow, thick, mark=diamond*] coordinates
				{(0,0.1152) (0.2,0.1161) (0.4,0.1178) (0.6,0.1197) (0.8,0.1214)};
				\addplot[carrot_green, thick, mark=x] coordinates
				{(0,0.1261) (0.2,0.1261) (0.4,0.1261) (0.6,0.1261) (0.8,0.1261)};
				\addplot[c_purple, thick, mark=triangle*] coordinates
				{(0,0.0888) (0.2,0.0969) (0.4,0.1046) (0.6,0.1116) (0.8,0.1182)};
				\addplot[cyan, thick, densely dashed, mark=triangle] coordinates
				{(0,0.0886) (0.2,0.0968) (0.4,0.1046) (0.6,0.1116) (0.8,0.1182)};
				\addplot[c_red, thick,  mark=asterisk] coordinates
				{(0,0.1155) (0.2,13106) (0.4,19866) (0.6,20581) (0.8,14340)};
			\end{groupplot}
		\end{tikzpicture}
		
		\ref{droplegend}
		\caption{Mean observation (left) and prediction (right) errors ($k \geq 100$, $20$ Monte Carlo runs, $M = 20$) versus the link failure probability $p$.}
		\label{figure_dropout}
	\end{figure}
	
	As shown in \cref{figure_dropout}, COIN-GP is the most accurate method for every $p$, since its recursion retains the last consensus estimate whenever a neighbor is unavailable, whereas the memoryless aggregators approach the non-cooperative baseline as $p$ grows.
	The dashed BCM curve is close to GPoE in \cref{figure_dropout}, with mean errors differing by less than $2 \times 10^{-4}$ for every $p$. The prior-variance experts of the non-collecting agents are exactly cancelled by the BCM prior-correction term and receive zero weight $\rho_j$ in GPoE, so both aggregators fuse the same informative experts and differ only in their weighting.
	RBCM diverges for every $p > 0$ (mean errors above $10^3$), as its prior-correction denominator vanishes whenever the effective neighborhood contains only near-prior experts, a situation that link failures readily create under tight budgets, confirming that the robustness of COIN-GP stems from its uncertainty-aware recursive structure.

	\section{Conclusion and Discussion}\label{sec_conclusion}
	This paper introduced COIN-GP, an observer-based cooperative learning framework that combines distributed state estimation with Gaussian process regression for systems with known linear dynamics, an unknown nonlinear component, and partial state measurements.
	The framework consists of three elements, namely a data collection strategy together with sufficient conditions under which a sensor is able to reconstruct training data from its own incomplete measurements, an uncertainty-weighted consensus law that lets sensors without collectable data benefit from the models of their neighbors, and a joint analysis of the coupled estimation and learning loop that yields a single error bound covering both the state estimate and the model prediction.
	The central conceptual message is that the posterior variance of the Gaussian process is not merely an accuracy indicator but the quantity that renders cooperation meaningful, since it tells each sensor how much of a neighbor's prediction to trust. It is precisely this calibrated weighting, which is absent both in aggregation schemes without a probabilistic model and in deterministic function approximators, that makes the interconnection of observer and learner analyzable and keeps the cooperative estimate reliable when parts of the network hold no data at all.
	
	Several limitations delineate the scope of the present results and suggest directions for future research.
	First, for long-term real-time deployment, the computational complexity of the Gaussian process must be prevented from growing unboundedly. 
	A sliding-window budget already bounds the per-step cost uniformly over time at a moderate accuracy loss, and more elaborate streaming techniques, such as logGP~\cite{Lederer_2021ICML_Gaussian} and SkyGP~\cite{Yang_Zhang_Dai_Yu_Zhang_Huang_Sadeghian_Haddadin_2026}, can be seamlessly embedded into the data-collecting agents.
	Second, the system matrices $(\boldsymbol{A},\boldsymbol{B})$ are assumed to be known.
	Bounded perturbations $(\Delta\bm{A}, \Delta\bm{B})$ split into a matched part in the range of $\bm{B}$, which is absorbed into $\bm{f}$ via $\tilde{\bm{f}}(\bm{x}) = \bm{f}(\bm{x}) + \bm{B}^\dagger (\Delta\bm{A} \bm{x} + \Delta\bm{B} \bm{f}(\bm{x}))$ and hence learned online, and an unmatched remainder, which acts as a bounded disturbance and enlarges the error bound $\bar{e}$ in proportion to its magnitude. A quantitative robustness analysis and the extension to fully unknown $(\boldsymbol{A},\boldsymbol{B})$ are left for future work.
	Third, the collectability condition \eqref{eqn_data_collection_condition} is sufficient but not necessary, and it is verified per sensor for a fixed structure of $\bm{H}_{i,d}$.
	Characterizing collectability by necessary and sufficient conditions, and exploiting the network structure so that sensors jointly satisfy a collectability condition that none of them meets individually, would considerably widen the class of admissible sensor configurations.
	Beyond these points, the framework is developed for estimation, whereas the joint error bound is exactly the ingredient required by safety-critical control; embedding COIN-GP into distributed predictive or safety-filter architectures is therefore a natural next step.

	\bibliographystyle{ieeetr}        
	\bibliography{ref}
	
\end{document}